\def\eqref#1{equation~\ref{#1}}
\def\1{\bm{1}}
\def\vtheta{{\bm{\theta}}}
\def\ve{{\bm{e}}}
\def\vt{{\bm{t}}}
\def\vx{{\bm{x}}}
\DeclareMathAlphabet{\mathsfit}{\encodingdefault}{\sfdefault}{m}{sl}
\SetMathAlphabet{\mathsfit}{bold}{\encodingdefault}{\sfdefault}{bx}{n}
\newcommand{\x}{{\boldsymbol{x}}}
\newcommand{\f}{\pmb{f}}
\newcommand{\norm}[1]{\left\|#1\right\|}
\newcommand{\vpsi}{\bm{\psi}}
\newcommand{\vphi}{\bm{\phi}}
\newcommand{\eye}{\pmb{I}}
\newcommand{\zero}{\pmb{0}}
\newcommand{\s}{\pmb{s}}
\newcommand{\w}{\pmb{w}}
\newcommand{\rmd}{{{\rm d}}}
\newcommand{\vepsilon}{\bm{\epsilon}}
\newcommand{\vI}{\bm{I}}
\newtheorem{theorem}{Theorem}
\newtheorem{assumption}{Assumption}
\newtheorem{lemma}{Lemma}
\newtheorem{remark}{Remark}
\newcommand\shortsection[1]{\vspace{3pt}{\noindent\bf #1.}}
\crefname{ineq}{inequ.}{inequ.}
\crefname{equation}{Eq.}{Eqs.}
\crefname{theorem}{Thm.}{Thm.}
\crefname{proposition}{Prop.}{Prop.}
\crefname{claim}{Claim}{Claims}
\crefname{algorithm}{Alg.}{Alg.}
\crefname{definition}{Def.}{Def.}
\crefname{lemma}{Lemma}{Lemmas}
\crefname{example}{Example}{Examples}
\crefname{appendix}{Appx.}{Appx.}
\crefname{figure}{Fig.}{Fig.}
\crefname{table}{Tab.}{Tab.}
\crefname{section}{Sec.}{Sec.}
\crefname{assumption}{Asm.}{Asm.}
\title{Diffusion-based Cumulative Adversarial Purification \\ for Vision Language Models}
\author{\name Jia Fu \email jiafu@kth.se \\
      \addr RISE Research Institutes of Sweden \\ KTH Royal Institute of Technology
      \AND
      \name Yongtao Wu \email yongtao.wu@epfl.ch \\
      \addr Swiss Federal Institute of Technology Lausanne
      \AND
      \name Yihang Chen \email yhangchen@cs.ucla.edu\\
      \addr University of California, Los Angeles
      \AND
      \name Kunyu Peng \email kunyu.peng@kit.edu\\
      \addr Karlsruhe Institute of Technology
      \AND
      \name Xiao Zhang \email xiao.zhang@cispa.de\\
      \addr CISPA Helmholtz Center for Information Security
      \AND
      \name Volkan Cevher \email volkan.cevher@epfl.ch\\
      \addr Swiss Federal Institute of Technology Lausanne
      \AND
      \name Sepideh Pashami \email sepideh.pashami@ri.se\\
      \addr RISE Research Institutes of Sweden \\ Halmstad University
      \AND
      \name Anders Holst \email anders.holst@ri.se\\
      \addr RISE Research Institutes of Sweden \\ KTH Royal Institute of Technology
      }
\begin{document}

\maketitle

\begin{abstract}
Vision Language Models (VLMs) have shown remarkable capabilities in multimodal understanding, yet their susceptibility to adversarial perturbations poses a significant threat to their reliability in real-world applications. Despite often being imperceptible to humans, these perturbations can drastically alter model outputs, leading to erroneous interpretations and decisions. This paper introduces DiffCAP, a novel diffusion-based purification strategy that can effectively neutralize adversarial corruptions in VLMs.
We theoretically establish a provable recovery region in the forward diffusion process and meanwhile quantify the convergence rate of semantic variation with respect to VLMs. These findings manifest that adversarial effects monotonically fade as diffusion unfolds. Guided by this principle, DiffCAP leverages noise injection with a similarity threshold of VLM embeddings as an adaptive criterion, before reverse diffusion restores a clean and reliable representation for VLM inference.
Through extensive experiments across six datasets with three VLMs under varying attack strengths in three task scenarios, we show that DiffCAP outperforms existing defense techniques by a substantial margin. Notably, DiffCAP significantly reduces both hyperparameter tuning complexity and the required diffusion time, thereby accelerating the denoising process. Equipped with theorems and empirical support, DiffCAP provides a robust and practical solution for securely deploying VLMs in adversarial environments. The source code is available at \url{https://github.com/JasonFu1998/DiffCAP}.
\end{abstract}

\section{Introduction}

Vision language models (VLMs) have exhibited impressive performance in a diverse range of multimodal understanding tasks~\citep{radford2021learning,lu2019vilbert,jia2021scaling,alayrac2022flamingo}, empowering numerous real-world applications such as image-grounded text generation (e.g., image captioning and visual question answering)~\citep{li2020oscar,mokady2021clipcap,li2022blip} and zero-shot classification~\citep{radford2021learning,zhai2022lit,alayrac2022flamingo,zhai2023sigmoid}. However, their inherent susceptibility to adversarial perturbations presents a critical challenge~\citep{zhao2023evaluating,qi2024visual,zhang2022towards}. These perturbations are 
usually designed to be imperceptible to humans, but when added to natural images,
can deceive models into making incorrect predictions, severely undermining their reliability and effectiveness~\citep{goodfellow2014explaining,madry2017towards,carlini2017towards}. Adversarial vulnerability is especially concerning as malicious actors may exploit these ML systems to spread misinformation or fraudulent activities~\citep{wu2024safety}, highlighting the urgent need for robust defensive strategies~\citep{jin2024jailbreakzoo,liu2024survey}.

\begin{figure}[t]
  \centering
  \includegraphics[width=1.0\textwidth]{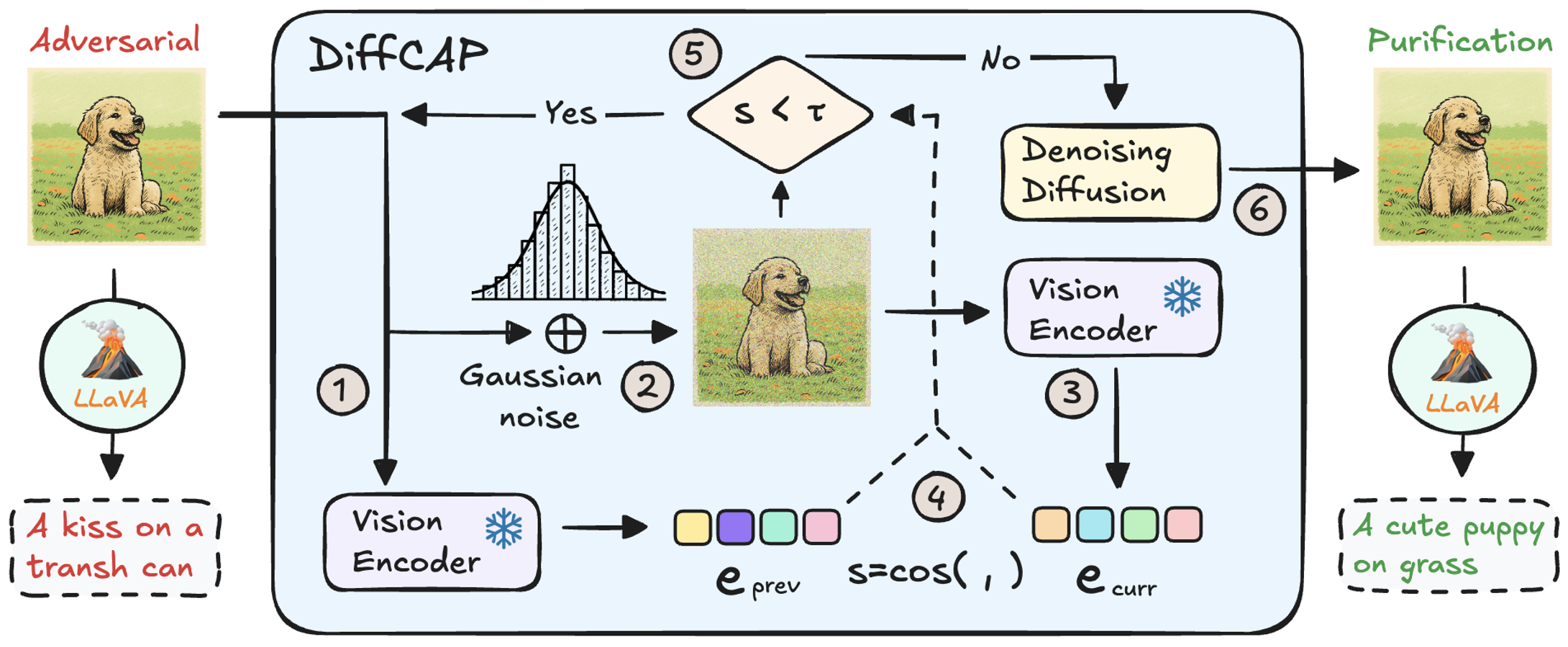}
  \caption{Overview of the DiffCAP Pipeline: An adversarial input is cumulatively processed through steps 1 to 5; if a stopping condition is not satisfied at step 5, the process restarts from step 1, otherwise the purification output is generated at step 6. See \cref{alg:diffcap} for details.}
  \label{fig:main}
\end{figure}

To mitigate this threat, significant research efforts have focused on adversarial defenses specifically designed for VLMs~\citep{liu2024survey,weng2025mmj}. A dominant direction in this field is adversarial training, which fine-tunes models using adversarially perturbed data to enhance robustness. For instance, recent approaches such as RobustCLIP~\citep{schlarmann2024robust} have leveraged supervised adversarial fine-tuning to fortify VLMs against specific attack types. Although effective within their training scope, these methods exhibit significant limitations, particularly poor generalization to novel, unseen attacks~\citep{dolatabadi2022ℓ,laidlaw2020perceptual} and substantial computational overhead associated with continuous retraining and fine-tuning procedures~\citep{wong2020fast,andriushchenko2020understanding}.

In contrast, adversarial purification~\citep{yoon2021adversarial,nie2022diffusion} emerges as a promising alternative that does not require the expensive adversarial fine-tuning of models. Techniques like DiffPure~\citep{nie2022diffusion} have demonstrated the feasibility of purification approaches by directly removing adversarial perturbations from input data using generative models, such as diffusion models~\citep{song2020denoising,song2020score,yang2023diffusion,song2019generative}. These methods maintain a high generalization capability to unseen attacks without necessitating modifications to the underlying VLMs, thus preserving original model performance on benign inputs. Despite these advantages, current generative model-based purification techniques suffer from substantial slowdown at inference time, hindering their practical deployment, particularly for real-time scenarios involving large VLMs.

To address these critical shortcomings, we propose DiffCAP, abbreviated for \emph{Diffusion-based Cumulative Adversarial Purification}, the first adversarial purification strategy specifically designed for VLMs. DiffCAP leverages a novel mechanism that dynamically identifies the minimal necessary diffusion time, effectively balancing purification efficacy and computational efficiency. Our method cumulatively injects random Gaussian noise into adversarially perturbed images until the embeddings of two consecutively noised images converge to a predefined similarity threshold, indicating the potential neutralization of adversarial perturbations. A diffusion model subsequently denoises this stabilized image, enabling the recovery of a clean and interpretable image for VLM inference. 

\shortsection{Contribution and novelty} We summarize as following points:

\begin{itemize}
    \item
    We formulate a provable recovery region during the forward diffusion process, with the VLM acting as a zero-shot classifier (\cref{thm:purification-region}). Furthermore, we quantify the convergence rate of VLM-encoded semantic between adjacent forward diffusion steps (\cref{corr:corollary1}). These two theorems reveal that adversarial perturbations can be counteracted after sufficient diffusion steps, with the embedding change diminishing monotonically as diffusion progresses.
    \item
    DiffCAP’s per-input minimized diffusion time resolves a key limitation of previous diffusion-based purification works, including DiffPure: their reliance on a fixed diffusion time, which is suboptimal for inputs of varying difficulty and requires hyperparameter tuning. 
    \item 
    We conduct comprehensive experiments across three popular VLMs, six diverse datasets, multiple perturbation strengths, and various multimodal tasks, including image captioning, visual question answering, and zero-shot classification. The results demonstrate that DiffCAP achieves consistent outperformance over baselines on generation and competitive performance on classification.
\end{itemize}

\section{Related work}
\label{sec:related_work}
\shortsection{Perturbation-based attacks}
Perturbation-based attacks cause models to make incorrect predictions by introducing small, often imperceptible, alterations to the input data~\citep{chakraborty2021survey, huang2017adversarial}. These attacks commonly leverage gradient-based methods to find the most vulnerable parts of an input, then craft perturbations to maximize the model's loss. A classic illustration is adversarial image attacks, where minute pixel modifications, invisible to humans, can trick a model into misclassifying an image~\citep{goodfellow2014explaining, madry2017towards}.

\shortsection{Adversarial training}
Adversarial training employs optimization techniques to bolster model robustness and safety alignment.  TeCoA~\citep{mao2022understanding} applies supervised adversarial fine-tuning on ImageNet, while FARE~\citep{schlarmann2024robust} leverages an unsupervised adversarial fine-tuning approach using an embedding loss on PGD-perturbed~\citep{madry2017towards} inputs to enhance CLIP vision encoder robustness and zero-shot performance. Addressing challenges in such methods, Hossain et al.~\citep{hossain2024sim} developed Sim-CLIP, which integrates a Siamese architecture with a cosine similarity loss to align clean and perturbed representations, incorporating a stop-gradient mechanism for efficient training without negative samples. This was further extended by Sim-CLIP+~\citep{hossain2024securing}, which tailors the cosine similarity loss and stop-gradient mechanism to defend VLMs against advanced optimization-based jailbreak attacks, preventing symmetric loss collapse while maintaining computational efficiency.

\shortsection{Adversarial purification} 
Adversarial purification techniques~\citep{shi2021online,yoon2021adversarial,fu2025diffpad} offer a distinct defense paradigm by employing generative models to sanitize images from adversarial perturbations~\citep{samangouei2018defense,hill2021stochastic}. The main advantage is its ``plug-in'' simplicity to address new threats without the need to retrain vision models—since the adversarial images being sanitized independently of attack specifics and the vision models. However, this adaptability used to be constrained by weaker performance compared to adversarial training methods~\citep{croce2020reliable}. The vulnerability becomes especially clear when faced with adaptive attackers who have knowledge of the defense system~\citep{athalye2018obfuscated,tramer2020adaptive}, a problem generally rooted in the inherent weaknesses of the previous generative models, such as GAN~\citep{goodfellow2020generative}. The rise of diffusion models~\citep{song2020score}, known for their generative capabilities, high sample diversity and inherent stochasticity, signals a promising direction for mitigating these persistent issues. DiffPure~\citep{nie2022diffusion} proposed to use the diffusion model for adversarial purification. CLIPure~\citep{zhang2025clipure} operates directly in the CLIP latent space, correcting embeddings of adversarial examples for downstream tasks. However, previous works did not discuss the optimal number of steps for forward noise-injection. DiffPure~\citep{nie2022diffusion} and CLIPure~\citep{zhang2025clipure} both use a fixed diffusion time, which is inflexible for adversarial inputs of diverse hardness. We propose a threshold-based stopping criterion in DiffCAP, and therefore reduces the number of noise-injection steps and improves performance by a significant margin. 

\section{Preliminaries}
\label{sec:Preliminaries}

This section provides the background and preliminary definitions of vision encoders, including their use for both CLIP and Vision LLMs, as well as continuous-time diffusion models.

\subsection{Vision encoder}
\label{sec:vision encoder}

\shortsection{CLIP} 
Contrastive Language-Image Pre-training (CLIP)~\citep{radford2021learning} consists of a vision encoder $\vphi: \mathbb{R}^d \rightarrow \mathbb{R}^m$ and a text encoder $\vpsi:  \mathbb{R}^{d'} \rightarrow \mathbb{R}^m$. The vision encoder and the text tokenizer have the same embedding dimension. We define a zero-shot classifier $h$. For a $K$-class task, text prompts $\vt_k$, such as \texttt{``A photo of <class $k$>''}, are generated for $k=1,\ldots,K$. The classifier $h$, determined by $\vphi$ and $\vpsi$, calculates logits for an input image $\vx$ via the cosine similarity between the image embedding and each prompt embedding:
\begin{equation}
\label{eq:CLIP_encoder}
h_k(\vx) = \cos(\vphi(\vx),\vpsi(\vt_k)) = \left\langle{\frac{\vphi(\vx)}{\norm{\vphi(\vx)}_2}, \frac{\vpsi(\vt_k)}{\norm{\vpsi(\vt_k)}_2}}\right\rangle.
\end{equation}

\shortsection{Vision LLMs}
Vision LLMs, such as LLaVA~\citep{liu2024llavanext,liu2024improved,liu2023llava} and MiniGPT~\citep{zhu2023minigpt}, consist of a vision encoder $\vphi: \mathbb{R}^d \rightarrow \mathbb{R}^m$, a text tokenizer, and a language model. The vision encoder and the text tokenizer have the same embedding dimension. When feeding the VLM with an image and the instruction, a vision encoder transforms this image to hidden embeddings of dimension $m$. The text tokenizer first tokenizes the instruction into tokens, and then looks up the embedding matrix to get its $m$-dimensional embedding. The image embedding and the instruction embedding are concatenated and then fed into a language model to generate a description. The vision encoder can be the vision encoder in CLIP~\citep{radford2021learning}, and the instruction prompt is usually like \texttt{``Describe this image in detail''}. 

\subsection{Diffusion model}
\label{sec:diffusion model}

In this section, we briefly introduce the continuous-time diffusion models~\citep{song2020score}. Let $p(\x)$ represent the underlying, unknown distribution for data points $\x \in \mathbb{R}^d$. The core idea of diffusion models is to progressively transform samples from $p(\x)$ into Gaussian noise. Formally, the transformation can be formulated by the forward diffusion process $\{\x(t)\}_{t \in [0, 1]}$, which is governed by the following stochastic differential equation (SDE) in the time interval $[0, 1]$:
\begin{align}\label{f_sde}
\rmd\x = \f(\x, t)\rmd t + g(t)\rmd\w(t),
\end{align}
where $\f: \mathbb{R}^d \times \mathbb{R} \to \mathbb{R}^d$ stands for the drift function, $g: \mathbb{R} \to \mathbb{R}$ is the diffusion function, and $\w(t) \in \mathbb{R}^{d}$ denotes the Brownian motion.
Note that the diffusion process starts with $\x(0)$ drawn from the underlying data distribution $p(\x)$.

The distribution of $\x(t)$ at any time $t$ is $p_t(\x)$, with the initial data distribution being $p_0(\x) = p(\x)$. The functions $\f(\x, t)$ and $g(t)$ are chosen carefully so that as $t$ approaches $1$, the distribution $p_1(\x)$ closely resembles a standard $d$-dimensional Gaussian distribution, $\mathcal{N}(\zero, \eye_d)$.
We follow the Variance Preserving (VP) SDE~\citep{song2020score}, where the drift and diffusion coefficients are $\f(\x, t) = -\frac{1}{2}\beta(t)\x$ and $g(t)=\sqrt{\beta(t)}$ respectively. Here, $\beta(t)$ is a function that controls the noise level over time.

To generate new samples, one must reverse the diffusion process. This is achieved by solving a corresponding reverse-time SDE of \cref{f_sde}:
\begin{align}\label{r_sde}
\rmd \hat{\x} = \left[\f(\hat{\x},t) - g(t)^2 \nabla_{\hat{\x}} \log p_t(\hat{\x})\right] \rmd t + {g(t)} \rmd \bar{\w}.
\end{align}
The generation process starts with drawing an initial sample $\hat{\x}(1)$ from the standard Gaussian distribution $\mathcal{N}(\zero, \eye_d)$. Then, by integrating this SDE from $t=1$ down to $t=0$, the noisy sample $\hat{\x}(t)$ is progressively denoised. The goal is for the final output $\hat{\x}(0)$ to be a sample from the original data distribution $p_0(\x)$. However, the score function $\nabla_\x \log p_t(\x)$ in \cref{r_sde} is usually intractable. In practice, we use a neural network, denoted by $\s_{\vtheta}(\x, t)$ and parameterized by $\vtheta$, to approximate the score function~\citep{song2020score,kingma2021variational}.

\section{Methodology}
\label{last}

In this section, we introduce DiffCAP, a purification mechanism that leverages forward diffusion dynamics and semantic stability to remove adversarial perturbations in VLMs. When we pass an adversarial image $\vx_{\text{adv}}$  into the diffusion model, $\x(t)$ follows a forward diffusion process governed by the VP-SDE:
\begin{equation} \label{eq:vp-sde}
    \rmd \x(t) = -\tfrac{1}{2} \beta(t) \x(t) \rmd t + \sqrt{\beta(t)}\, \rmd\w(t), \quad \x(0) = \vx_{\text{adv}},
\end{equation}
 $\beta(t) > 0$ is a smooth noise schedule, and $\w(t)$ is a standard Wiener process. This process has a closed-form solution at time $t$ given by
\begin{equation} \label{eq:vp-sde-sol}
    \x(t) = \sqrt{\alpha(t)}\, \vx_{\text{adv}} + \sqrt{1 - \alpha(t)}\, \vepsilon, \quad \vepsilon \sim \mathcal{N}(\zero, \eye_d),
\end{equation}
where $\alpha(t) = \exp\left(-\int_0^t \beta(s)\, \rmd s\right)$. Our method builds upon the insight from randomized smoothing~\citep{cohen2019certified}: adding Gaussian noise to adversarial inputs can recover the original predictions with high probability. We need the following basic assumptions to facilitate our analysis.
\begin{assumption}[Scale Invariance]
\label{ass:inva}
We assume that the classifier $h$ (defined in \cref{eq:CLIP_encoder}) is scale-invariant: for any scalar $\lambda > 0$ and any input $\x \in \mathbb{R}^d$, we have $
h(\lambda \x) = h(\x).$
\end{assumption}

\begin{algorithm}[t]
\caption{DiffCAP: Diffusion-based Cumulative Adversarial Purification}
\label{alg:diffcap}
\setstretch{1.05}

\begin{algorithmic}[1] 

\Require 
    Adversarially perturbed input image $\vx_{\text{adv}}$; 
   An image encoder $\vphi(\cdot)$ (e.g., from VLM); 
    Similarity threshold $\tau$; 
    Pretrained diffusion denoiser $D(\cdot)$; 
    Maximum number of forward diffusion steps $T$.

\Ensure 
    Purified image $\vx_{\text{clean}}$.
\State Initialize step counter $t \leftarrow 0$.
\State Set initial image $\vx_0 \leftarrow \vx_{\text{adv}}$.
\State Calculate initial embedding $\ve_{\text{prev}} \leftarrow \vphi(\vx_0)$.

\While{$t < 1$} \Comment{Iteratively inject noise and check stability}
    \State Inject noise into the images based on \cref{eq:vp-sde-sol} to obtain $\vx_{t+1/T}$.
    \State Calculate current embedding $\ve_{\text{curr}} \leftarrow \vphi(\vx_{t+1/T})$.
    
    \If{$ \cos(\ve_{\text{curr}}, \ve_{\text{prev}}) \ge \tau$} \Comment{Check if embeddings have stabilized}
        \State \textbf{break} \Comment{Exit loop, stabilization reached}
    \EndIf
    
    \State Update previous embedding: $\ve_{\text{prev}} \leftarrow \ve_{\text{curr}}$, update $t \leftarrow t + 1/T$.
\EndWhile

\State Set the stabilized (but potentially noisy) image $\vx_{\text{stable}} \leftarrow \vx_t$.
\State Denoise the stabilized image using the diffusion model: $\vx_{\text{clean}} \leftarrow D(\vx_{\text{stable}})$.

\State \Return $\vx_{\text{clean}}$.

\end{algorithmic}
\end{algorithm}

\begin{remark}
\cref{ass:inva} is standard in deep learning theory and practically justified for models as we can always scale the input~\citep{zhang2020over,wu2024robust}.
\end{remark}
We now present our main result that establishes a provable recovery region under forward diffusion for the adversarial image. The proof can be found at \cref{proof:purification-region}.
\begin{theorem}[Provable recovery region under forward diffusion]
\label{thm:purification-region}
Let $h: \mathbb{R}^d \to [K]$ be the classifier.
Let \( \vx_{\text{adv}} = \x + \vepsilon_{\text{adv}} \) be an adversarial example with perturbation $\vepsilon_{\text{adv}}$, and let \( \x(t) \) be the solution to the forward diffusion process defined in \cref{eq:vp-sde-sol} with a linear noise schedule \( \beta(t) = \beta_{\min} + (\beta_{\max} - \beta_{\min}) t \). 
  Suppose there exists \( \underline{p_1}, \overline{p_2}, k_1 \) such that for all $t \in [0,1]$,
\begin{align}\label{eq_t_max}
\mathbb{P}(h(\vx+\vepsilon^\prime(t)) = k_1) \ge \underline{p_1} > \overline{p_2 }\ge \max_{k \neq k_1} \mathbb{P}(h(\vx+ \vepsilon^\prime(t)) = k),
\end{align}
where 
$\vepsilon^\prime(t) \sim \mathcal{N}(\boldsymbol{0},  \frac{1 - \alpha(t)}{\alpha(t)} \vI_d)$. 
Define
\[
t_{\min} = \frac{
   2M
}{
  \sqrt{
    \beta_{\min}^2 + 2(\beta_{\max} - \beta_{\min}) M
  }
 + \beta_{\min}
},\quad {\rm where\ } M := \log \left( 1 + \left( \frac{2 \| \vepsilon_{\text{adv}} \|_2}{\vphi^{-1}(\underline{p_1}) - \vphi^{-1}(\overline{p_2})} \right)^2\right).
\]
Then, under \cref{ass:inva}, when $\beta_{\min}\geq M$, for all $t_{\min}\le t\leq 1$, we have
$
\arg \max_{k \in [K]} \; \mathbb{P}(h(\x(t)) = k) = k_1
$, i.e., the adversarial example is classified as its original label $k_1$ after sufficient forward diffusion.
\end{theorem}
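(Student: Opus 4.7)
The plan is to reduce the certified-recovery claim to a Gaussian randomized-smoothing statement at the adversarial point $\vx_{\text{adv}}$, and then invert the resulting condition on the noise variance back to a threshold on the diffusion time.

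First I would exploit \cref{ass:inva} to renormalize the closed-form \cref{eq:vp-sde-sol}: dividing the argument of $h$ by $\sqrt{\alpha(t)}>0$ gives
\begin{equation*}
h(\x(t)) \;=\; h\!\left(\vx_{\text{adv}} + \sigma(t)\,\vepsilon\right),\qquad \sigma(t)^2 := \tfrac{1-\alpha(t)}{\alpha(t)},\ \vepsilon\sim\mathcal{N}(\vzero,\vI_d),
\end{equation*}
so the distribution of $h(\x(t))$ coincides with that of the Gaussian-smoothed classifier $\bar h_{\sigma(t)}$ evaluated at $\vx_{\text{adv}} = \vx + \vepsilon_{\text{adv}}$. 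Observe that \cref{eq_t_max} is precisely the statement that at the clean base point $\vx$, the smoothed classifier puts mass at least $\underline{p_1}$ on $k_1$ and at most $\overline{p_2}$ on every runner-up class, since $\sigma(t)\vepsilon$ and $\vepsilon'(t)$ share the same Gaussian law. I would then invoke the certified-radius theorem of \cite{cohen2019certified}, an immediate consequence of the Neyman--Pearson lemma, to transport the prediction from $\vx$ to $\vx_{\text{adv}}$: the prediction of $\bar h_{\sigma(t)}$ is unchanged as long as $\|\vepsilon_{\text{adv}}\|_2 \leq \tfrac{\sigma(t)}{2}\bigl(\vphi^{-1}(\underline{p_1})-\vphi^{-1}(\overline{p_2})\bigr)$, where $\vphi^{-1}$ denotes the standard-normal inverse CDF. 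Equivalently, $\arg\max_k\mathbb{P}(h(\x(t))=k) = k_1$ whenever $\sigma(t)^2 \geq \bigl(2\|\vepsilon_{\text{adv}}\|_2/(\vphi^{-1}(\underline{p_1})-\vphi^{-1}(\overline{p_2}))\bigr)^2$.

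Finally I would translate this variance bound into the stated time bound. Using $\sigma(t)^2 = 1/\alpha(t) - 1$ and $\alpha(t) = \exp(-\int_0^t\beta(s)\,\rmd s)$, the condition collapses to $\int_0^t\beta(s)\,\rmd s \geq M$. Plugging in the linear schedule yields the quadratic inequality $\tfrac{1}{2}(\beta_{\max}-\beta_{\min})t^2 + \beta_{\min}t \geq M$, whose nonnegative root --- after multiplying numerator and denominator by the conjugate factor $\beta_{\min}+\sqrt{\beta_{\min}^2 + 2(\beta_{\max}-\beta_{\min})M}$ --- is exactly the advertised $t_{\min}$. A direct estimate using $\beta_{\min}\geq M$ shows $t_{\min}\leq 1$, ensuring that the recovery window $[t_{\min},1]$ is nonempty and sits inside the diffusion horizon. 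The main obstacle I expect is the scale-invariance step: one must carefully justify that \cref{ass:inva} really does convert the forward SDE into a randomized-smoothing query at $\vx_{\text{adv}}$ with the \emph{exact} noise variance appearing both in the margin hypothesis \cref{eq_t_max} and in the Cohen--Rosenfeld--Kolter radius, so that the constants line up cleanly to yield the $M$ in the statement.
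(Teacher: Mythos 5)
Your proposal is correct and follows essentially the same route as the paper's proof: use the scale invariance of \cref{ass:inva} to rewrite $h(\x(t))$ as a Gaussian-smoothing query at $\vx_{\text{adv}}$ with variance $\sigma(t)^2 = \tfrac{1-\alpha(t)}{\alpha(t)}$, apply the certified radius of Cohen et al.\ to transport the prediction from $\x$ to $\vx_{\text{adv}}$, and invert the resulting variance condition through the linear schedule to obtain the quadratic whose positive root is $t_{\min}$, with $\beta_{\min}\ge M$ guaranteeing $t_{\min}\le 1$. Your framing of the hypothesis \cref{eq_t_max} as the margin condition of the smoothed classifier at the clean point, and your explicit identification of $\vphi^{-1}$ as the standard-normal inverse CDF, if anything make the argument slightly more transparent than the paper's write-up.
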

\begin{remark}
\cref{thm:purification-region} indicates that an adversarially perturbed image with added noise will eventually be classified correctly. Then, we can expect to use the diffusion model to remove the added noise to obtain the clean image. 
\end{remark}

\begin{algorithm}[t]
\caption{Adaptive Similarity Threshold ($\tau$) Calculation}
\label{alg:adaptive_threshold_calc}
\setstretch{1.05}
\begin{algorithmic}[1] 

\Require 
   The dataset of clean-adversarial image pairs $D_{\text{pairs}} = \{(\vx_{\text{clean}}, \vx_{\text{adv}})\}$; 
    The embedding function $\vphi(\cdot)$. Maximum number of forward diffusion steps $T$. 

\Ensure 
    The similarity threshold $\tau$.
\State Initialize step counter $t \leftarrow 0$.
\ForAll{pair $(\vx_{\text{clean}}, \vx_{\text{adv}}) \in D_{\text{pairs}}$}
\State Set initial image $\vx_{0,{\text{clean}}} \leftarrow \vx_{{{\text{clean}}}}$, $\vx_{0,\text{adv}} \leftarrow \vx_{{{\text{adv}}}}$.
\State Calculate initial embedding $\ve_{\text{prev},{\text{clean}}} \leftarrow \vphi(\vx_{0,{\text{clean}}}), \ve_{\text{prev},{\text{adv}}} \leftarrow \vphi(\vx_{0,\text{adv}})$.
\EndFor
\While{$t < 1$} \Comment{Iteratively inject noise and check stability}
\State Initialize total similarity set $S_{\text{clean}}\leftarrow \{\}, S_{\text{adv}}\leftarrow \{\}$. 
    \ForAll{pair $(\vx_{\text{clean}}, \vx_{\text{adv}}) \in D_{\text{pairs}}$}
    \State Inject noise into the images based on \cref{eq:vp-sde-sol} to obtain $\vx_{t+1/T,\text{clean}},\vx_{t+1/T,\text{adv}}$.
    \State Calculate current embedding $\ve_{\text{curr}, {\text{clean}}} \leftarrow \vphi(\vx_{t+1/T, {\text{clean}}}), \ve_{\text{curr}, {\text{adv}}} \leftarrow \vphi(\vx_{t+1/T, {\text{adv}}})$.
    \State Calculate similarity $s_{\text{clean}} \leftarrow \cos(\ve_{\text{curr}, {\text{clean}}}, \ve_{\text{prev}, {\text{clean}}}), s_{\text{adv}} \leftarrow \cos(\ve_{\text{curr}, {\text{adv}}}, \ve_{\text{prev}, {\text{adv}}})$. 
    \State Add the similarity score to the set $S_{\text{clean}}\leftarrow S_{\text{clean}}\cup\{s_{\text{clean}}\}$, $S_{\text{adv}}\leftarrow S_{\text{adv}}\cup\{s_{\text{adv}}\}$.
    \EndFor

    \If{$S_{\text{adv}}$ and $S_{\text{clean}}$ are from the same underlying distribution} 
        \State \textbf{break} 
    \EndIf
        \ForAll{pair $(\vx_{\text{clean}}, \vx_{\text{adv}}) \in D_{\text{pairs}}$}
    \State Update previous embedding: $\ve_{\text{prev},{\text{clean}}} \leftarrow \ve_{\text{curr},{\text{clean}}}, \ve_{\text{prev},{\text{adv}}} \leftarrow \ve_{\text{curr},{\text{adv}}}$.
        \EndFor
    \State Update $t \leftarrow t + 1/T$.
\EndWhile
\State \Return $\tau\leftarrow {\rm mean}(S_{\text{clean}})$.
\end{algorithmic}
\end{algorithm}

Upon this point, we start to analyze the dynamics in the VLM embedding space during the forward diffusion process. In \cref{thm:purification-region}, we prove a recovery region, indicating the local smoothness of $\vphi(\vx(t))$ for $t\geq t_{\min}$. Therefore, we pose a local Lipschitz assumption after time $t_{\min}$.
\begin{assumption}
    We assume $\vphi$ is $L-$Lipschitz for $t \ge t_{\min}$,
\begin{align*}
    \|\vphi(\x(t)) - \vphi(\x(t^\prime))\|_2 \leq L \|\x(t) - \x(t^\prime)\|_2  \quad \forall  t, t^\prime \ge t_{\min}.
\end{align*}
\label{ass:locallip}
\end{assumption}
\begin{lemma}\label{thm:theorem1}
Under the same setting as in \cref{thm:purification-region} and \cref{ass:locallip}, let $\x(t)$ be constructed under a specific coupling using a shared Gaussian noise $\vepsilon \sim \mathcal{N}(\zero, \eye_d)$, corresponding to the marginal distribution defined in \cref{eq:vp-sde-sol}.
Then for any $t_1, t_2 \in [0,1]$, as $t_1, t_2 \to 1$, we have $\mathbb{E} \left[ \|\vphi(\x(t_1)) - \vphi(\x(t_2))\|_2 \right] \to 0.$
\end{lemma}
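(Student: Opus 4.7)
The plan is to chain two observations: the local Lipschitzness of $\vphi$ provided by \cref{ass:locallip}, and the continuity of the VP--SDE coefficients $\sqrt{\alpha(\cdot)}$ and $\sqrt{1-\alpha(\cdot)}$ in $t$, which together pin both quantities to their common value at $t=1$.

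First I would check that $t_{\min}\le 1$ under the hypothesis $\beta_{\min}\ge M$ of \cref{thm:purification-region}: squaring the definition of $t_{\min}$ reduces the bound to $2M\le \beta_{\min}+\beta_{\max}$, which is implied by $\beta_{\min}\ge M$ together with the standard $\beta_{\max}\ge \beta_{\min}$. Hence for $t_1,t_2$ close enough to $1$ both exceed $t_{\min}$, and \cref{ass:locallip} gives
\begin{equation*}
\mathbb{E}\!\left[\|\vphi(\x(t_1))-\vphi(\x(t_2))\|_2\right]\le L\,\mathbb{E}\!\left[\|\x(t_1)-\x(t_2)\|_2\right].
\end{equation*}
So it suffices to show that the right-hand side vanishes as $t_1,t_2\to 1$.

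Next, I would adopt the natural coupling implicit in \cref{eq:vp-sde-sol}, namely a shared Gaussian draw $\vepsilon\sim\mathcal{N}(\zero,\eye_d)$ used at both times (equivalently, two nearby points on a common SDE path). Then
\begin{equation*}
\x(t_1)-\x(t_2)=\bigl(\sqrt{\alpha(t_1)}-\sqrt{\alpha(t_2)}\bigr)\vx_{\text{adv}}+\bigl(\sqrt{1-\alpha(t_1)}-\sqrt{1-\alpha(t_2)}\bigr)\vepsilon,
\end{equation*}
and the triangle inequality together with Jensen's inequality ($\mathbb{E}\|\vepsilon\|_2\le\sqrt{d}$) yields
\begin{equation*}
\mathbb{E}\!\left[\|\x(t_1)-\x(t_2)\|_2\right]\le \bigl|\sqrt{\alpha(t_1)}-\sqrt{\alpha(t_2)}\bigr|\,\|\vx_{\text{adv}}\|_2+\bigl|\sqrt{1-\alpha(t_1)}-\sqrt{1-\alpha(t_2)}\bigr|\sqrt{d}.
\end{equation*}
Since $\alpha(t)=\exp\bigl(-\int_0^t\beta(s)\rmd s\bigr)$ is smooth on $[0,1]$, both $\alpha(t_1),\alpha(t_2)\to\alpha(1)$ as $t_1,t_2\to 1$, so each bracketed quantity vanishes, closing the argument.

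The main obstacle, and the point I would flag at the very start, is the choice of coupling between $\x(t_1)$ and $\x(t_2)$: the statement holds only when the two times share the same realization of the driving noise (or sit on a common SDE trajectory with $t_1-t_2\to 0$). Under two independent draws $\vepsilon_1,\vepsilon_2$ the expected squared distance tends to $2(1-\alpha(1))d>0$ rather than $0$. This is a modeling convention rather than a deep analytic subtlety, but without pinning it down the conclusion cannot be written in the clean form above.
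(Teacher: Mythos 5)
Your proposal is correct and follows essentially the same route as the paper's proof: reduce to $\mathbb{E}\left[\|\x(t_1)-\x(t_2)\|_2\right]$ via the Lipschitz assumption, expand the difference using the closed-form solution with a shared Gaussian draw, and let continuity of $\alpha$ finish the argument (the paper passes through the second moment via Jensen where you use the triangle inequality, an immaterial variation). Your explicit flagging of the coupling convention and the check that $t_{\min}\le 1$ are points the paper leaves implicit, but they do not change the argument.
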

The proof is deferred to \cref{proof:thm1}. \cref{thm:theorem1} shows that embeddings converge during forward diffusion, which motivates our stopping criterion based on similarity of VLM latent representations. We quantify this convergence rate in the following theorem.
\begin{theorem}
\label{thm:2}
Let \( \x(t) \) be defined by \cref{eq:vp-sde-sol} under the same joint coupling stated in \cref{thm:theorem1}.
Then for small \( \delta > 0 \),
\begin{equation}
\begin{split}
\mathbb{E} \left[ \|\vphi(\x(t)) - \vphi(\x(t + \delta))\|_2 \right] = O\left( L \cdot  \delta \cdot \beta(t) \sqrt{ \frac{\alpha(t) }{1 - \alpha(t)} } \right),
\text{\ for\ }  t \in [t_{\min}, 1 - \delta],
\label{equ:corbound}
\end{split}
\end{equation}
where $\alpha(t) = \exp\left(-\int_0^t \beta(s)\, \rmd s\right)$. Moreover, by using a common linear noise schedule \( \beta(t) = \beta_{\min} + (\beta_{\max} - \beta_{\min}) t \) with \( \beta_{\min} > 0 \) and \( \beta_{\max} > \beta_{\min} \). 
Then $\beta(t) \cdot \sqrt{ \frac{\alpha(t)}{1 - \alpha(t)} }$ is strictly decreasing for all \( t \in [t_{\min}, 1) \).
\label{corr:corollary1}
\end{theorem}
The proof is deferred to \cref{proof:corr}. \cref{corr:corollary1} quantifies the semantic change between adjacent forward diffusion steps.
The bound in \cref{equ:corbound} decreases as $t \to 1$ under the common linear noise schedule. Consequently, the expected semantic change between adjacent forward diffusion steps diminishes as the process approaches terminal time. 

Our derivations inspire a simple yet powerful strategy––inject Gaussian noise until the semantic embedding stabilizes. We call this algorithm DiffCAP, summarized in \cref{alg:diffcap}. The cumulative diffusion continues until the cosine similarity between consecutive VLM embeddings exceeds a threshold $\tau$. A pretrained diffusion model is then applied in reverse to recover a clean image from the stabilized noisy input.

We describe how to select the threshold $\tau$ in \cref{alg:adaptive_threshold_calc}. The core idea is to iteratively inject noise into both clean and adversarial images and track the cosine similarity between the embeddings across consecutive steps of noise injection. This is done for a collection of image pairs to reduce randomness. The process continues until the set of similarity scores for clean images and the set for their adversarial counterparts are statistically indistinguishable (i.e., likely from the same underlying distribution). At this point, the algorithm determines that the noise has reached a level where the embeddings' stability is comparable for both types of images. The mean of such a score set delivers the final threshold $\tau$.

\shortsection{Extension to vision-dominant VLM tasks}
\label{sec:extensionVLM}

Our theoretical analysis intuitively extends to VLMs through two complementary perspectives:

\begin{itemize}
    \item The classifier $h$ maps $\mathbb{R}^d \to [K]$. This naturally captures CLIP-based zero-shot classification, where the final prediction is a Softmax over cosine similarities between the image embedding and text prompts. For generative VLMs like LLaVA, this formulation holds under standard VQA setups. By adopting a structured prompting strategy (e.g., \texttt{``Classify this image into categories [1],...,[K]''}), the mapping becomes a function from $x \in \mathbb{R}^d$ to a probability distribution over tokens representing the $K$ class indices. Thus, the theoretical guarantees for $h(x)$ directly apply to the VLM’s decision-making process in discriminative tasks.
    \item Essentially, our theoretical contribution is not limited to the final output label but is rooted in the stability of the vision encoder's embedding space. VLMs generate text sequences conditioned on the image embedding $\phi(x)$. \cref{thm:2} proves the convergence and stability of this semantic embedding $\phi(x(t))$ during the diffusion process. Since the VLM's generation is a function of this embedding, establishing a provable recovery region serves as a necessary condition for robust generation, whether the downstream task is classification, captioning, or VQA.
\end{itemize}

\section{Experiments}

\subsection{Settings}
\label{setup}

\shortsection{Models, datasets \& metrics} 
We evaluate DiffCAP across three vision-language tasks: image captioning (IC), visual question answering (VQA), and zero-shot classification (ZSC). For IC and VQA, we adopt two large VLMs—OpenFlamingo (OF)~\citep{awadalla2023openflamingo} with $9$B parameters and LLaVA-1.5~\citep{liu2024improved} with $7$B parameters. For ZSC, we utilize CLIP~\citep{radford2021learning} with $88$M parameters as the backbone model. Our experiments are conducted on standard benchmarks: COCO~\citep{lin2014microsoft} and Flickr30k~\citep{plummer2015flickr30k} for IC, VQAv2~\citep{goyal2017making} and TextVQA~\citep{singh2019towards} for VQA, and CalTech101~\citep{li_andreeto_ranzato_perona_2022} and ImageNet-R(endition)~\citep{hendrycks2021many} for ZSC. For both adversarial and clean evaluation, we randomly sample $500$ images for IC and VQA, while $1,000$ images are chosen for ZSC. We report Consensus-based Image Description Evaluation (CIDEr)~\citep{vedantam2015cider} score for IC, VQA accuracy~\citep{antol2015vqa} for VQA, and top-1 accuracy for ZSC.

\shortsection{Attacks} 
For IC and VQA, we adopt a two-stage attack pipeline following~\citep{schlarmann2023adversarial}. In the first stage, we apply $100$-step Auto-PGD (APGD) attacks~\citep{croce2020reliable} in half-precision using multiple ground-truth captions or answers as supervision. Samples that fall below a predefined performance threshold are excluded from further attacks. In the second stage, we conduct stronger single-precision APGD attacks on the remaining samples. This progressive strategy maximizes adversarial impact while remaining computationally efficient. For ZSC, we follow the AutoAttack framework, employing APGD with cross-entropy loss and targeted Difference of Logits Ratio (DLR) loss with $100$ iterations, respectively. The implementation of adaptive attack is articulated in \cref{more}. 

\shortsection{Baselines}
We compare DiffCAP with two categories of adversarial defense methods. The first category includes adversarially fine-tuned vision encoders. Since both OF and LLaVA adopt CLIP as their vision backbone, we replace their CLIP vision encoder with two robust variants: TeCoA~\citep{mao2022understanding} and FARE~\citep{schlarmann2024robust}. TeCoA applies supervised adversarial training, while FARE employs an unsupervised loss. The second category includes purification methods: JPEG-DL~\citep{salamah2024jpeg}, the trainable JPEG compression layer to remove adversarial perturbations; DiffPure~\citep{nie2022diffusion}, the first approach leveraging the diffusion process to recover the clean image in the pixel space; CLIPure~\citep{zhang2025clipure}, the latest method that operates directly in the CLIP latent space.

\textbf{Hyperparameters.} 
The algorithms are implemented through PyTorch, and main experiments are conducted on an NVIDIA A100 40G GPU. By default, we use the ViT-B/32 CLIP vision encoder to ensure computational efficiency. For the forward diffusion, we schedule noise with $\beta_{\min} = 0.1$, $\beta_{\max} = 20$, and a fixed step size of $0.01$. In the reverse generation, we employ guided diffusion with a step size of $0.015$. We take advantage of the pre-trained diffusion model from~\citep{dhariwal2021diffusion}. Following~\cref{alg:adaptive_threshold_calc}, we determine the threshold $\tau=0.96$ on subsets comprising $100$ random clean-adversarial image pairs from the datasets mentioned above.

\subsection{Result analysis}

\begin{table*}[t]
  \centering
  \setlength{\tabcolsep}{4pt}
  \renewcommand{\arraystretch}{1.1}
  \caption{CIDEr score of two VLMs in IC task on two datasets with clean images and adversarial perturbations of two sizes under different defenses. $2$ and $4$ with TeCoA and FARE suggest the version that is fine-tuned by $\ell_\infty^{2/255}$ and $\ell_\infty^{4/255}$ bounded adversarial examples, respectively.
  The best result is in \textbf{bold} and the runner-up is \underline{underlined}.}
  \vspace{0.05in}
  \resizebox{\textwidth}{!}{%
  \begin{tabular}{l||lll|lll||lll|lll}
    \toprule
    \multirow{3}{*}{\textbf{Defense}} &
    \multicolumn{6}{c||}{\textbf{OF-9B}} &
    \multicolumn{6}{c}{\textbf{LLaVA 1.5-7B}} \\ 
    \hhline{~||---|---||---|---|}
     &
    \multicolumn{3}{c|}{\textbf{COCO}} &
    \multicolumn{3}{c||}{\textbf{Flickr30k}} &
    \multicolumn{3}{c|}{\textbf{COCO}} &
    \multicolumn{3}{c}{\textbf{Flickr30k}} \\ 
     &
     clean & $\ell_\infty^{2/255}$ & $\ell_\infty^{4/255}$ &
     clean & $\ell_\infty^{2/255}$ & $\ell_\infty^{4/255}$ &
     clean & $\ell_\infty^{2/255}$ & $\ell_\infty^{4/255}$ &
     clean & $\ell_\infty^{2/255}$ & $\ell_\infty^{4/255}$ \\ 
    \midrule
    No defense &
      79.7 &  1.5 &  1.1 & 60.1 & 0.7 & 0.4 &
     115.5 &  4.0 &  3.1 & 77.5 & 1.6 & 1.0 \\
    \cdashline{1-13}
    TeCoA$^{2}$ \citep{mao2022understanding} &
      73.5 & 31.6 & 21.2 & 49.5 & 14.1 &  9.5 &
      98.4 & 44.2 & 30.3 & 57.1 & 23.2 & 15.3 \\
    FARE$^{2}$ \citep{schlarmann2024robust} &
      79.1 & 34.2 & 19.5 & 57.7 & 16.4 &  8.9 &
     109.9 & 53.6 & 31.0 & 71.1 & 29.5 & 17.5 \\
    TeCoA$^{4}$ \citep{mao2022understanding} &
      66.9 & 28.5 & 21.6 & 40.9 & 12.0 & 10.3 &
      88.3 & 50.9 & 35.3 & 48.6 & 27.9 & 19.5 \\
    FARE$^{4}$ \citep{schlarmann2024robust}&
      74.1 & 30.9 & 22.8 & 51.4 & 15.7 & 10.5 &
     102.4 & 57.1 & 40.9 & 61.6 & 31.4 & 22.8 \\
    \cdashline{1-13}
    JPEG-DL \citep{salamah2024jpeg}   & 78.2 & 66.1 & 43.9 & \underline{58.8} & 47.6 & 30.7 & 113.3 & 106.4 & 77.2 & 74.8 & \underline{69.6} & 47.9 \\
    DiffPure \citep{nie2022diffusion}& 74.9 & \underline{73.4} & \underline{72.3} & 49.8 & \underline{49.2} & \underline{50.3} & 106.5 & \underline{108.4} & \underline{105.0} & 65.5 & 66.4 & \underline{63.2} \\
    CLIPure \citep{zhang2025clipure} & \underline{80.8} & 6.6 & 5.3 & \textbf{59.3} & 4.7 & 3.5 & \underline{115.1} & 4.9 & 3.4 & \textbf{76.9} & 2.1 & 1.5 \\
    \rowcolor{gray!30}
    DiffCAP  & \textbf{81.4}
              & \textbf{79.3} & \textbf{78.4} & 55.6
              & \textbf{56.7} & 
              \textbf{57.2} & \textbf{120.4}
              & \textbf{119.6} & \textbf{116.9} & \underline{75.0}
              & \textbf{72.7} & \textbf{72.1} \\ 
    \bottomrule
      \end{tabular}}%
  \label{tab1}
\end{table*}

\shortsection{Image captioning} 
As shown in \cref{tab1}, VLMs are highly vulnerable to adversarial perturbations: even $\ell_\infty^{2/255}$ attacks can reduce CIDEr scores close to zero. Adversarial training methods (TeCoA and FARE) provide moderate robustness improvements. However, their effectiveness drops significantly under $\ell_\infty^{4/255}$ attacks. Notably, TeCoA and FARE also degrade the clean performance, especially on Flickr30k. JPEG-DL shows better robustness than TeCoA and FARE, but remains sensitive to perturbation strength. DiffPure substantially improves robustness, lifting performance under both $\ell_\infty^{2/255}$ and $\ell_\infty^{4/255}$ attacks to levels comparable with clean conditions. DiffCAP consistently outperforms all baselines across both VLMs and datasets. 
For instance, DiffCAP improves CIDEr scores by over ${10\%}$ with OF on Flickr30k and with LLaVA on COCO compared to DiffPure. Furthermore, DiffCAP maintains or even improves clean performance, demonstrating strong fidelity preservation. Lastly, CLIPure performs poorly in this task. Its limited effectiveness likely stems from token-level misalignment: purifying only the \texttt{[CLS]} token embedding fails to influence generation-related latent tokens, which dominate the captioning process.

\begin{table*}[t]
  \centering
  \setlength{\tabcolsep}{4pt}
  \renewcommand{\arraystretch}{1.1}
  \caption{VQA accuracy ($\%$) of two VLMs in VQA task on two datasets with clean images and adversarial perturbations of two sizes under different defenses. $2$ and $4$ with TeCoA and FARE suggest the version that is fine-tuned by $\ell_\infty^{2/255}$ and $\ell_\infty^{4/255}$ bounded adversarial examples, respectively. The best result is in \textbf{bold} and the runner-up is \underline{underlined}.}
  \vspace{0.05in}
  \resizebox{\textwidth}{!}{%
  \begin{tabular}{l||lll|lll||lll|lll}
    \toprule
    \multirow{3}{*}{\textbf{Defense}} &
    \multicolumn{6}{c||}{\textbf{OF-9B}} &
    \multicolumn{6}{c}{\textbf{LLaVA 1.5-7B}} \\ 
    \hhline{~||---|---||---|---|}
     &
    \multicolumn{3}{c|}{\textbf{TextVQA}} &
    \multicolumn{3}{c||}{\textbf{VQAv2}} &
    \multicolumn{3}{c|}{\textbf{TextVQA}} &
    \multicolumn{3}{c}{\textbf{VQAv2}} \\ 
     &
     clean & $\ell_\infty^{2/255}$ & $\ell_\infty^{4/255}$ &
     clean & $\ell_\infty^{2/255}$ & $\ell_\infty^{4/255}$ &
     clean & $\ell_\infty^{2/255}$ & $\ell_\infty^{4/255}$ &
     clean & $\ell_\infty^{2/255}$ & $\ell_\infty^{4/255}$ \\ 
    \midrule
      No defense &
      23.8 & 0.0 & 0.0 & 48.5 & 1.8 & 0.0 &
     37.1 & 0.5 & 0.0 & 74.5 & 2.9 & 0.0 \\
    \cdashline{1-13}
    TeCoA$^{2}$ \citep{mao2022understanding} &
      16.6 & 3.5 & 2.1 & 46.2 & 23.5 & 20.5 &
      24.1 & 12.1 & 8.8 & 66.9 & 33.8 & 21.8 \\
    FARE$^{2}$ \citep{schlarmann2024robust}&
      \underline{21.6} & 4.1 & 1.9 & \underline{47.0} & 24.0 & 17.2 &
     31.9 & 14.7 & 9.1 & \underline{71.7} & 34.9 & 23.0 \\
    TeCoA$^{4}$ \citep{mao2022understanding} &
      15.4 & 2.1 & 1.8 & 44.8 & 23.6 & 21.3 &
      20.7 & 12.6 & 9.3 & 63.2 & 41.0 & 31.7 \\
    FARE$^{4}$ \citep{schlarmann2024robust}&
      18.6 & 3.4 & 2.9 & 46.1 & 23.6 & 21.0 &
     27.6 & 15.8 & 10.9 & 68.3 & 40.7 & 30.5 \\
    \cdashline{1-13}
    JPEG-DL \citep{salamah2024jpeg}  & \textbf{23.4} & \underline{15.9} & 13.1 & 46.8 & 39.5 & 32.4 & \underline{34.6} & \underline{27.2} & 21.1 & 68.8 & 60.8 & 45.8 \\
    DiffPure \citep{nie2022diffusion}& 13.6 & 13.2 & \underline{13.5} & 45.1 & \underline{43.5} & \underline{43.6} & 20.9 & 22.0 & \underline{22.2} & 67.3 & \underline{65.8} & \underline{66.0} \\
    CLIPure \citep{zhang2025clipure} & 20.5 & 6.8 & 8.8 & \textbf{47.3} & 18.8 & 17.5 & \textbf{36.1} & 2.1 & 1.4 & \textbf{73.3} & 4.6 & 2.1 \\
    \rowcolor{gray!30}
    DiffCAP & 18.6 & \textbf{16.2} & \textbf{16.7} & 46.3 & \textbf{45.4} & \textbf{45.3} & 28.3 & \textbf{29.0} & \textbf{28.9} & 70.3 & \textbf{69.1} & \textbf{68.5} \\ 
    \bottomrule
      \end{tabular}}%
  \label{tab2}
\end{table*}

\shortsection{Visual question answering} 
The results in \cref{tab2} mirror the trends observed in \cref{tab1}. DiffCAP consistently delivers the strongest performance across all attack settings compared to all baselines in both datasets and VLMs. The improvement is particularly notable with LLaVA on TextVQA, where DiffCAP surpasses DiffPure by over $30\%$. Remarkably, on the TextVQA dataset, DiffPure performs worse than JPEG-DL, indicating that visual reasoning tasks depend more heavily on fine-grained visual features, which are susceptible to over-smoothing or distortion during purification. This underscores the importance of preserving semantic fidelity when applying generative models for purification. DiffCAP addresses this by dynamically calculating the minimal diffusion time required to remove adversarial noise for each image, thereby achieving a better trade-off between robustness and feature integrity for multi-hop reasoning tasks.

\shortsection{Adaptive attack}
\begin{table}[t]
  \centering
  \setlength{\tabcolsep}{4pt}
  \renewcommand{\arraystretch}{1.05}
  \begin{minipage}[t]{0.62\linewidth}
    \caption{Evaluation for OF-9B and LLaVA 1.5-7B in IC and VQA tasks on four datasets under clean and adversarial ($\ell_\infty^{8/255}$) conditions, with and without (w/o) DiffCAP defense against adaptive attacks.}
    \label{tab:a}
    \centering
    \vspace{0.05in}
    \resizebox{\textwidth}{!}{
    \begin{tabular}{ll||cc|cc|cc|cc}
  \toprule
    \multirow{2}{*}{} & \multirow{2}{*}{\textbf{Dataset}}
     &
    \multicolumn{2}{c|}{\textbf{Clean}} &
    \multicolumn{2}{c|}{\textbf{APGD}} &
    \multicolumn{2}{c|}{\textbf{BPDA}} &
    \multicolumn{2}{c}{\textbf{BPDA} + \textbf{EOT}} 
    \\
  &  & w/o & with & w/o & with & w/o & with & w/o & with \\ 
  \midrule
  \multirow{4}{*}{\rotatebox{90}{\textbf{OF}}}
  & COCO      &  90.1   & \cellcolor{gray!30} 92.4   &  4.7 & \cellcolor{gray!30} 91.1 & 27.1 & \cellcolor{gray!30} 79.9  & 29.2 & \cellcolor{gray!30} 83.4 \\
  & Flicker30k    &  63.9    &  \cellcolor{gray!30} 62.7  &  4.9  & \cellcolor{gray!30} 60.5 & 19.1 & \cellcolor{gray!30} 50.6 & 15.5 & \cellcolor{gray!30} 56.5 \\
  \cdashline{2-10}
  & TextVQA  & 23.1 & \cellcolor{gray!30} 18.6 & 0.6 & \cellcolor{gray!30} 17.6 & 7.1 & \cellcolor{gray!30} 18.2 & 2.3 & \cellcolor{gray!30} 16.0 \\
  & VQAv2 &  46.2  &  \cellcolor{gray!30} 47.1   &  8.3 & \cellcolor{gray!30} 44.6  & 24.0 & \cellcolor{gray!30} 44.5 & 18.0 & \cellcolor{gray!30} 39.5 \\
  \midrule
  \multirow{4}{*}{\rotatebox{90}{\textbf{LLaVA1.5}}}
  & COCO     &  125.9   &  \cellcolor{gray!30} 122.2  &  11.3  & \cellcolor{gray!30} 123.4 & 21.9 & \cellcolor{gray!30} 115.9 & 19.5 & \cellcolor{gray!30} 114.9 \\
  & Flicker30k    &  81.7   & \cellcolor{gray!30} 78.0   &  8.5  & \cellcolor{gray!30} 76.2 & 20.9 & \cellcolor{gray!30} 73.4 & 18.0 & \cellcolor{gray!30} 74.6 \\
  \cdashline{2-10}
  & TextVQA  & 36.9 & \cellcolor{gray!30} 25.1 & 7.4 & \cellcolor{gray!30} 22.7 & 8.8 & \cellcolor{gray!30} 24.3 & 8.7 & \cellcolor{gray!30} 21.7 \\
  & VQAv2 &  74.3   & \cellcolor{gray!30} 69.9   &  23.4 & \cellcolor{gray!30} 67.5  & 25.7 & \cellcolor{gray!30} 65.4 & 27.1 & \cellcolor{gray!30} 66.9 \\
  \bottomrule
\end{tabular}
}
  \end{minipage}
  \hfill
  \begin{minipage}[t]{0.35\linewidth}
    \caption{Top-1 accuracy ($\%$) in ZSC task. We use different CLIP vision encoders for DiffCAP. Numbers in parenthesis denote parameters in M.}
    \label{tab:b}
    \centering
    \vspace{0.05in}
    \resizebox{\textwidth}{!}{
    \begin{tabular}{c l|lll}
  \toprule
  & \textbf{Encoder} &
  clean & $\ell_\infty^{2/255}$ & $\ell_\infty^{4/255}$ \\ 
  \midrule
  \multirow{5}{*}{\rotatebox{90}{\textbf{CalTech101}}}
  & RN50 (102)       &   82.8  &  82.2   &  82.5   \\
  & RN101 (123)     &   83.2  &  83.1   &   81.2  \\
  & ViT-B/32 (88)  & 82.6 & 81.7 & 80.9 \\
  & ViT-B/16  (149) &  83.0  &  82.3   &  80.9   \\ 
  & ViT-L/14 (304)  &  82.1   &  82.4   &   81.5  \\ 
  \midrule
  \multirow{5}{*}{\rotatebox{90}{\textbf{ImageNet-R}}}
  & RN50 (102)      &  84.2   &  82.9   &   80.8  \\
  & RN101 (123)     &  86.7   &  85.5   &   84.1  \\
  & ViT-B/32 (88)   & 87.2 & 84.4 & 81.1 \\
  & ViT-B/16 (149)  &  84.7   &   85.0  &  82.2   \\ 
  & ViT-L/14 (304)  &   85.5  &   83.2  &  81.4   \\ 
  \bottomrule
\end{tabular}
}
  \end{minipage}
\end{table}

The above \textit{gray-box} setting assumes the adversary can access the gradients of the model but has no knowledge of the defense pipeline. We also evaluate DiffCAP in a \textit{white-box} setting, where the adversary has full knowledge of the deployed defense mechanism. \cref{tab:a} presents the detailed evaluations for IC and VQA tasks across various attack configurations, comparing performance with and without DiffCAP defense. Even under an increased attack budget ($\ell_\infty^{8/255}$), DiffCAP maintains high fidelity on clean inputs, with only an average performance drop of $3.3$. Under APGD~\citep{croce2020reliable} attacks, it successfully restores the performance of VLMs on different datasets to levels closely matching their clean baselines, showing only a modest average degradation of $4.8$. 

\begin{figure}[t]
  \centering
  \includegraphics[width=1.0\textwidth]{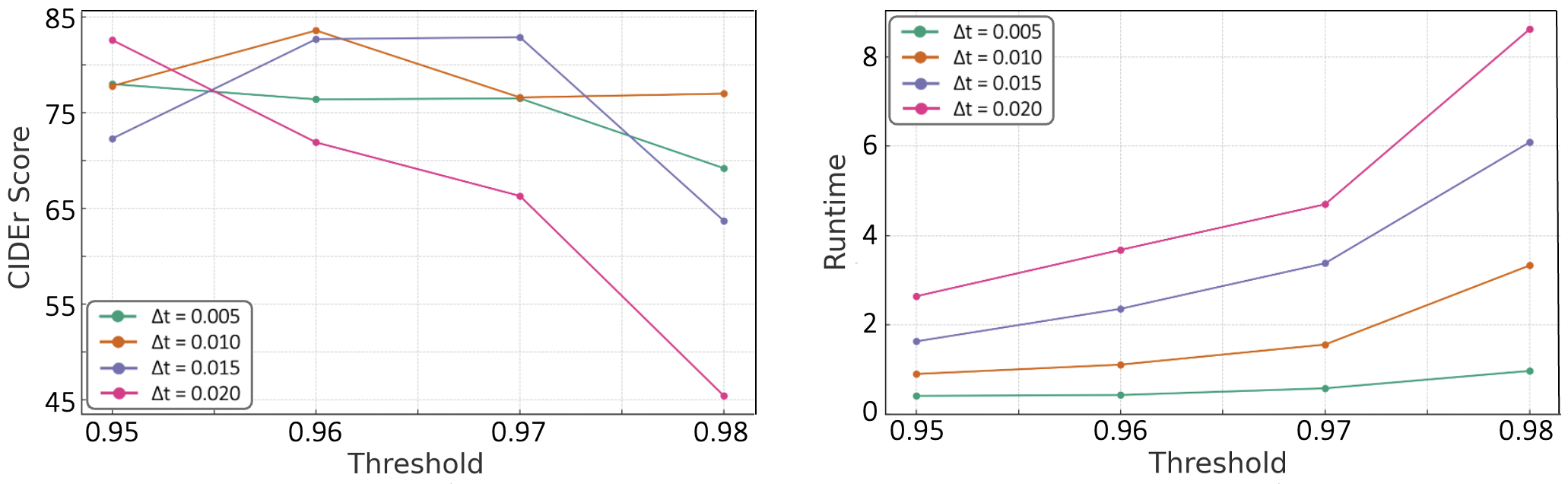}
  \caption{CIDEr score and running time (in seconds) per image with varying thresholds $\tau$ and diffusion step sizes ($\Delta t$) for DiffCAP. The evaluation is based on the IC task under $\ell_\infty^ {2/255}$ attack.}
  \label{fig4}
\end{figure}

In scenarios where the adversary bypasses gradient obfuscation through backward pass differentiable approximation (BPDA)~\citep{athalye2018obfuscated} and simulates stochasticity via expectation over transformations (EOT)~\citep{athalye2018synthesizing}, DiffCAP continues to demonstrate strong resilience. The best-case performance degradation relative to clean conditions is only $1.7$ (OF-VQAv2) by BPDA without EOT and $6.7$ (OF-COCO) with EOT. The corresponding worst-case performance reductions are observed as $13.3$ (OF-Flickr30k) and $15.2$ (LLaVA-TextVQA), respectively. These results elucidate the inherent uncertainty of DiffCAP's image-adaptive diffusion step calculation, which determines the minimal purification for individual adversarial examples based on semantic convergence during the diffusion process. Such a dynamic strategy significantly prevents trivial gradient approximations and random regressions from circumventing its defense, enhancing adversarial robustness against adaptive attacks of prohibitively high time complexity.

\shortsection{Ablation study}
We conduct systematic ablation studies to validate the utility of DiffCAP. \cref{tab:b} presents results obtained by replacing the vision encoder in DiffCAP with different CLIP backbones. The results illustrate that DiffCAP is largely insensitive to the choice of vision encoder, maintaining robustness across all variants. To validate the effectiveness of the adaptive similarity threshold calculation described in \cref{alg:adaptive_threshold_calc}, we conduct an ablation study over different threshold values $\tau$ and diffusion step sizes $\Delta t$. \cref{fig4} displays the results by OF on the COCO dataset. We observe that setting the threshold to $0.96$ achieves the best overall robustness in terms of CIDEr score across a range of step sizes. A higher threshold generally leads to more diffusion steps, increasing time for reverse diffusion sampling. In practice, we find that a step size of $0.01$ offers the best trade-off between performance and computational cost. 

\shortsection{Other attack and model} We consummate with experiments on MiniGPT 4-13B~\citep{zhu2023minigpt}. We evaluated DiffCAP with default hyperparameters against the AttackVLM~\citep{zhao2023evaluating}, which proposed more advanced transfer-based and query-based adversarial attacks, specifically designed to mislead VLMs into generating specific target captions, rather than plain untargeted degradation. We strictly follow their evaluation protocol and attack configurations. The clean images are sampled from the ImageNet1K~\citep{deng2009imagenet} dataset and a target text is randomly selected from the COCO captions for each clean image. We report the CLIP score between the generated responses of input images and predefined targeted texts, as computed by various CLIP text encoders and their average. The prompt is fixed as \texttt{``what is the content of this image?''}. Pretrained CLIP encoders (ViT-B/32) are used as \textit{surrogate} models for attacks.

\begin{table}[t]
    \centering
    \caption{CLIP scores evaluating the DiffCAP against AttackVLM on MiniGPT 4-13B.}
    \vspace{0.05in}
    \label{tab:w3_attackvlm}
    \begin{tabular}{l|ccccc|c}
        \toprule
        \textbf{Setting} & RN50 & RN101 & ViT-B/32 & ViT-B/16 & ViT-L/14 & \textbf{Avg.} \\
        \midrule
        Clean (Baseline) & 0.383 & 0.619 & 0.375 & 0.361 & 0.356 & 0.419 \\
        \midrule
        MF-ii (No Defense) & 0.689 & 0.784 & 0.772 & 0.732 & 0.674 & 0.730 \\
        MF-ii (DiffCAP) & \textbf{0.470} & \textbf{0.505} & \textbf{0.481} & \textbf{0.459} & \textbf{0.411} & \textbf{0.465} \\
        \midrule
        MF-ii + MF-tt (No Defense) & 0.756 & 0.752 & 0.787 & 0.772 & 0.750 & 0.763 \\
        MF-ii + MF-tt (DiffCAP) & \textbf{0.405} & \textbf{0.528} & \textbf{0.439} & \textbf{0.422} & \textbf{0.380} & \textbf{0.435} \\
        \bottomrule
    \end{tabular}
\end{table}

As shown in \cref{tab:w3_attackvlm}, the `No Defense' settings yield high CLIP scores, indicating that the VLM was successfully manipulated into generating the target text. However, DiffCAP maintains its effectiveness under MF-ii (image-to-image transfer) and MF-ii + MF-tt (joint image-text query) attacks, reducing the CLIP scores comparable to, and in some cases lower than, the clean baseline. This verify the generalizability of DiffCAP for larger VLM architectures and stronger attack strategies.

\begin{wrapfigure}{r}{0.44\textwidth}
  \vspace{-0.11in}
  \centering
  \includegraphics[width=0.4\textwidth]{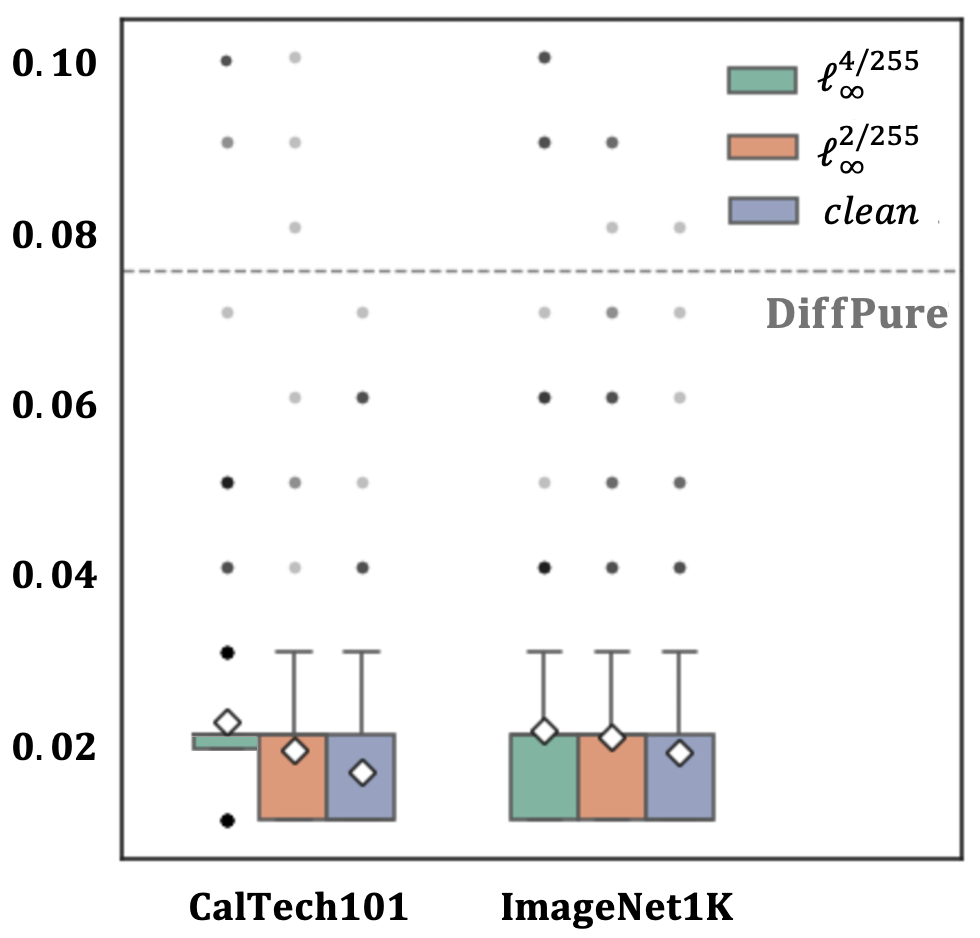}
  \caption{The box plot of DiffCAP's diffusion time $t$ ($y$-axis) before exiting noise injection loop. 
  The dashline ($t=0.075$) is the noise injection time tuned on $\ell_\infty$ attack reported by DiffPure paper. 
    DiffCAP requires significantly smaller diffusion time than DiffPure. The dots mark outliers and rhombuses mark mean values. }
  \vspace{0.31in}
  \label{fig:step}
\end{wrapfigure}

\textbf{Efficiency.} For the $\ell_\infty^{2/255}$ attack on the COCO dataset with OF-9B, DiffCAP demonstrates substantial efficiency gain over DiffPure in IC task. After a one-time calibration of threshold $\tau$ ($\sim 3.4$ seconds) by \cref{alg:adaptive_threshold_calc}, the average purification time is only $\sim 1.1$ seconds per image for DiffCAP, in contrast to $\sim 2.3$ seconds for DiffPure. The embedding extraction and the Gaussian noise injection consume only $\sim 6$ and $\sim 4$ milliseconds per iteration, respectively. Since the reverse denoising dominates the runtime, the additional overhead from embedding comparisons hardly offsets the runtime savings achieved through the reduced $\sim 2/3$ diffusion steps over DiffPure (illustrated by \cref{fig:step}), ensuring DiffCAP's practicality for real-time deployment scenarios.

\begin{figure}[t]
  \centering
  \begin{center}
    \textcolor{red}{Warning: The first example contains adversarially generated text that may be considered offensive.}
 \end{center}
  \includegraphics[width=1.0\textwidth]{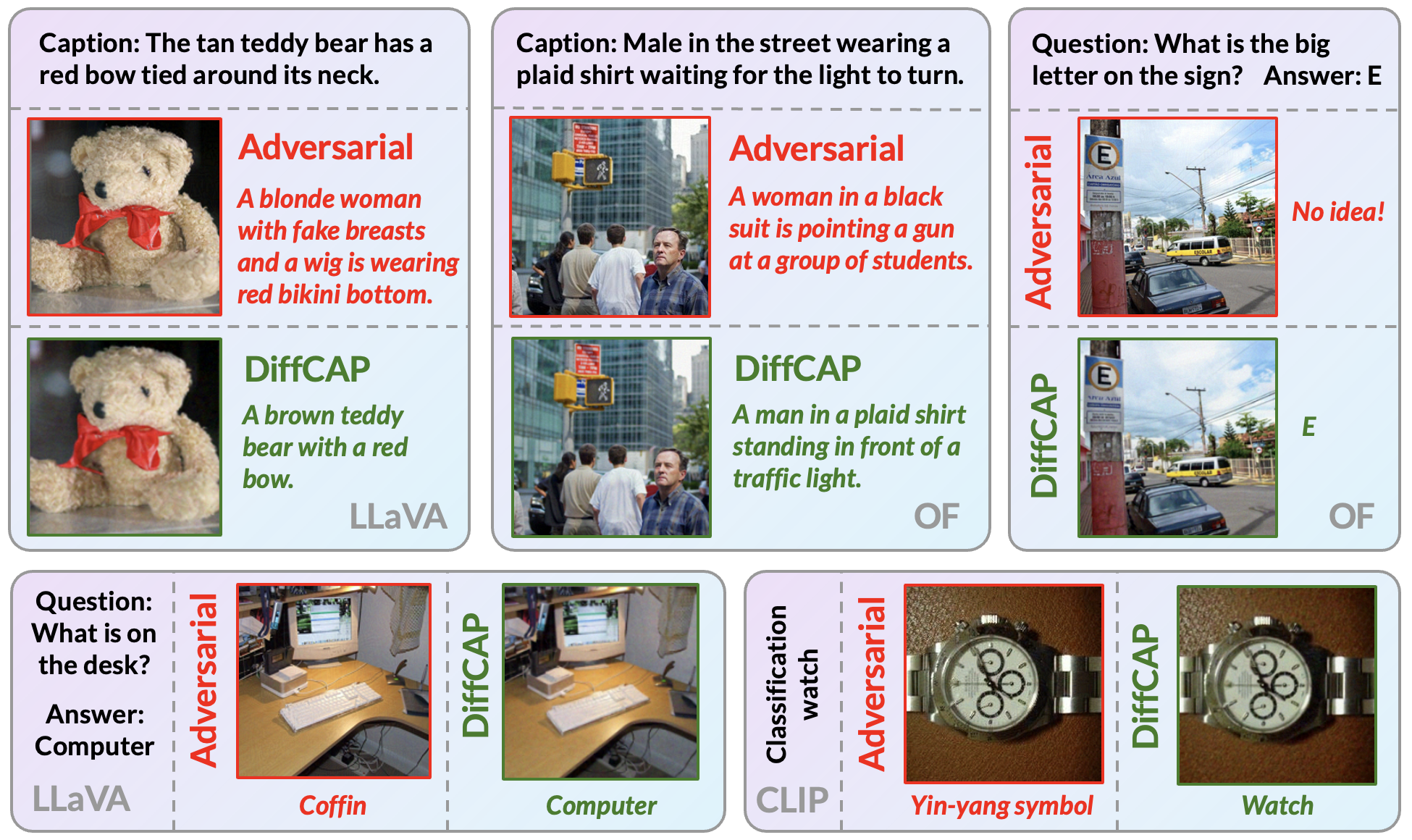}
  \caption{Adversarial examples and their DiffCAP purified outcomes under different tasks. Ground-truth labels are shown in black text. VLMs used for inference are shown in gray text.}
  \label{fig:example}
\end{figure}

\textbf{Further discussion.}
In \cref{sec:addex}, we supplement additional results on the ZSC task, $\ell_\infty^{16/255}$ attacks, robustness to visual hallucination~\citep{li2023evaluating} and jailbreaking~\citep{qi2024visual}, perceptual quality, $\ell_2$ attacks, end-to-end runtime and memory comparisons, and under-/over-purification. In \cref{sec:tau}, we attach a systematic analysis of critical hyperparameters, including the \textit{threshold sensitivity} (across calibration subsets, domain shifts, task transformations, and CLIP variants), the step size control (on robustness-fidelity trade-off), and the noise scheduler choice.

\cref{fig:example} showcases the purification consequence of DiffCAP in IC, VQA, and ZSC scenarios. As a generative adversarial purification method, it introduces no noticeable degradation in fidelity. DiffCAP prominently mitigates the tension between robustness, efficiency, and image quality, establishing a new state-of-the-art among both purification- and training-based defenses for VLMs.

\section{Conclusion}
In conclusion, this paper proposes DiffCAP, an efficient and theoretically inspired defense strategy for VLMs, supported by a provable recovery region and descending semantic change in forward diffusion. By leveraging cumulative Gaussian noise injection and a VLM embedding similarity-based stopping criterion, DiffCAP dynamically identifies the minimal purification steps required before denoising, substantially reducing computational overhead while maintaining high fidelity. DiffCAP outperforms state-of-the-art defenses under heterogeneous attacks across manifold tasks, VLMs, and datasets empirically.

\subsubsection*{Acknowledgments}
This work was financially supported by the Swedish Wireless Innovation Network (SweWIN) approved by the Swedish Innovation Agency (VINNOVA). The computations were enabled by the resources provided by the National Academic Infrastructure for Supercomputing in Sweden (NAISS), partially funded by the Swedish Research Council. This work was supported under project ID \# 37 as part of the Swiss AI Initiative, through a grant from the ETH Domain and computational resources provided by the Swiss National Supercomputing Centre (CSCS) under the Alps infrastructure. This work was funded by the Swiss National Science Foundation (SNSF) under grant number 2000-1-240094. Research was sponsored by the Army Research Office and was accomplished under Grant Number W911NF-24-1-0048.

\bibliography{main}
\bibliographystyle{tmlr}

\appendix
\section*{Contents of the Appendix}
We organize the appendix as follows:
\begin{itemize}
\item In \cref{sec:llm}, we declare the LLM usage, limitations, reproducibility, and border impact of this paper. 
\item In \cref{sec:theory_proof}, we provide complete proofs for \cref{thm:purification-region,thm:theorem1,corr:corollary1}. 
\item In \cref{sec:addex}, we supply more implementation details and experiment results. We further explore the potential of applying DiffCAP to boarder safety alignment challenges.
\item In \cref{sec:tau}, we attach a comprehensive analysis of the key hyperparameters of DiffCAP.
\end{itemize}

\section{Author statements}
\label{sec:llm}
\shortsection{The LLM usage} We only use the ChatGPT-4o~\citep{openai2024chatgpt4o} to rectify writing errors paragraph by paragraph as \texttt{``Please only revise the necessary part of the following paragraph if there are any incorrect grammar, unclear syntax, or unacademic expression: [our paragraph]''}.

\shortsection{Future work} While our method is highly effective in the vision modality, an important limitation is its current reliance on image-based diffusion models. Extending the cumulative purification framework to text or multimodal diffusion processes remains an open direction, potentially broadening its applicability to adversarial text modifications or joint vision-text threats.

\shortsection{Reproducibility}
We provide the clear assumptions and a complete proof of the proposed theorems and lemmas in \cref{last} and \cref{sec:theory_proof}, respectively. All used datasets/VLMs, pretrained diffusion models, and applied adversarial attacks in this work are publicly available. The relevant experimental setups are fully described in paper \cref{setup} and \cref{more}, together with the step-by-step algorithm flow \cref{alg:diffcap} and \cref{alg:adaptive_threshold_calc}, ensuring that the results can be manually reproduced.

\shortsection{Social impact}
With the swift application of VLMs, the risk of adversarial attacks has become a critical concern. This paper proposes DiffCAP, an adversarial purification method that can improve robustness without retraining the model, which may enlarge its usability in application scenarios, such as autonomous driving based on VLMs. While maintaining a remarkable defense effect, this method greatly reduces the diffusion steps and hyperparameter adjustments, which promotes the safe and fast implementation for defending pre-trained large models. However, any single defense mechanism may fail in the presence of new attacks, so maintaining a diverse and regularly tested defense strategy is essential.

\section{Theoretical proofs}\label{sec:theory_proof}

\subsection{\texorpdfstring{Proof of \cref{thm:purification-region}}{}}
\label{proof:purification-region}

\begin{proof}[Proof of \cref{thm:purification-region}]
Expanding the forward diffusion solution from \cref{eq:vp-sde-sol}, we get
$$
     \x(t) = \sqrt{\alpha(t)}\, \vx_{\text{adv}} + \sqrt{1 - \alpha(t)}\, \vepsilon = \sqrt{\alpha(t)} \left[\x + \frac{\sqrt{1-a(t)}}{\sqrt{a(t)}} \vepsilon + \vepsilon_{\text{adv}}  \right], \quad \vepsilon \sim \mathcal{N}(0, I).   
$$
Given \cref{ass:inva} and the property of Gaussian distribution, we can analyze the classifier output as follows by absorbing the scaling factor $\sqrt{a(t)}$ and introduce $\vepsilon^\prime$:
\[
h(\x(t)) = h(\vx + \vepsilon^\prime + \vepsilon_{\text{adv}}), \quad \text{where } \vepsilon^\prime \sim \mathcal{N}(0, \sigma(t)^2 I), \quad \sigma(t)^2 = \frac{1 - \alpha(t)}{\alpha(t)}.
\]
Applying the randomized smoothing bound from Theorem 1 of~\citep{cohen2019certified}, the classification is guaranteed to return class \( k_1 \) if
\[
\| \vepsilon_{\text{adv}} \|_2 < \frac{\sigma(t)}{2} \left( \Phi^{-1}(\underline{p_1}) - \Phi^{-1}(\overline{p_2}) \right).
\]
By re-organizing the term, we have
\[
\sigma(t) > \frac{2 \| \vepsilon_{\text{adv}} \|_2}{\Phi^{-1}(\underline{p_1}) - \Phi^{-1}(\overline{p_2})}.
\]
Since \( \sigma(t)^2 = \frac{1 - \alpha(t)}{\alpha(t)} \), this yields
\[
\frac{1 - \alpha(t)}{\alpha(t)} > \left( \frac{2 \| \vepsilon_{\text{adv}} \|_2}{\Phi^{-1}(\underline{p_1}) - \Phi^{-1}(\overline{p_2})} \right)^2.
\]
By re-organizing the term, we obtain
\[
\alpha(t) < \frac{1}{1 + \left( \frac{2 \| \vepsilon_{\text{adv}} \|_2}{\Phi^{-1}(\underline{p_1}) - \Phi^{-1}(\overline{p_2})} \right)^2 }.
\]
Since \( \alpha(t) = \exp\left(-\int_0^t \beta(s)\, \rmd s \right) \), and with linear schedule \( \beta(t) = \beta_{\min} + (\beta_{\max} - \beta_{\min}) t \), we compute
\[
\int_0^t \beta(s) \rmd s = \beta_{\min} t + \frac{1}{2} (\beta_{\max} - \beta_{\min}) t^2.
\]
Thus,
\[
\alpha(t) = \exp\left( -\beta_{\min} t - \frac{1}{2} (\beta_{\max} - \beta_{\min}) t^2 \right).
\]
Let \( M := \log \left( 1 + \left( \frac{2 \| \vepsilon_{\text{adv}} \|_2}{\Phi^{-1}(\underline{p_1}) - \Phi^{-1}(\overline{p_2})} \right)^2\right) \). Then by setting
\[
\beta_{\min} t + \frac{1}{2} (\beta_{\max} - \beta_{\min}) t^2 = M,
\]
we obtain
\[
t  = t_{\min} = \frac{
   2M
}{
  \sqrt{
    \beta_{\min}^2 + 2(\beta_{\max} - \beta_{\min}) M
  }
 + \beta_{\min}
}.
\]
Then, when $\beta_{\min}\geq M$, we have $t_{\min}\leq 1$, and for $t_{\min}\leq t\leq 1$, we have $\arg\max_{k\in[K]} \mathbb{P}(h(\vx(t)=k)=k_1$. 
\end{proof}

\subsection{\texorpdfstring{Proof of \cref{thm:theorem1}}{}}
\label{proof:thm1}

\begin{proof}[Proof of \cref{thm:theorem1}]
we have
\begin{align}
    \mathbb{E} \left[ \|\phi(\x(t_1)) - \phi(\x(t_2))\|_2 \right]
    &\leq L \cdot \mathbb{E} \left[ \|\x(t_1) - \x(t_2)\|_2 \right] \label{eq:lipschitz-apply} \\
    &\leq L \cdot \sqrt{ \mathbb{E} \left[ \|\x(t_1) - \x(t_2)\|_2^2 \right] }, \label{eq:jensen}
\end{align}
where \cref{eq:lipschitz-apply} follows from the Lipschitz continuity of $\phi$, \cref{eq:jensen} uses Jensen's inequality, i.e., $
\mathbb{E}\left[\|Z\|\right] \leq \sqrt{\mathbb{E}\left[\|Z\|^2\right]} \quad $ for any random vector $Z$.
Next, to compute $\mathbb{E} \left[ \|\x(t_1) - \x(t_2)\|^2 \right]$, use \cref{eq:vp-sde-sol}:
\[
\x(t_1) - \x(t_2)
= \left( \sqrt{\alpha(t_1)} - \sqrt{\alpha(t_2)} \right) \x_{\text{adv}}
+ \left( \sqrt{1 - \alpha(t_1)} - \sqrt{1 - \alpha(t_2)} \right) \vepsilon,
\]
where $\vepsilon \sim \mathcal{N}(0, I)$. Taking the squared norm and expectation:
\begin{align*}
\mathbb{E} \left[ \|\x(t_1) - \x(t_2)\|^2 \right]
&= \left( \sqrt{\alpha(t_1)} - \sqrt{\alpha(t_2)} \right)^2 \|\x_{\text{adv}}\|^2
+ \left( \sqrt{1 - \alpha(t_1)} - \sqrt{1 - \alpha(t_2)} \right)^2 \cdot \mathbb{E}\left[\|\vepsilon\|^2\right] \\
&= \left( \sqrt{\alpha(t_1)} - \sqrt{\alpha(t_2)} \right)^2 \|\x_{\text{adv}}\|^2
+ \left( \sqrt{1 - \alpha(t_1)} - \sqrt{1 - \alpha(t_2)} \right)^2 d.
\end{align*}
As \( t_1, t_2 \to 1 \), both terms vanish, so the expectation tends to zero.
\end{proof}

\subsection{\texorpdfstring{Proof of \cref{corr:corollary1}}{}}
\label{proof:corr}

Before proving \cref{corr:corollary1}, we need the following Lemma.
\begin{lemma}
\label{lemma:strictdecrease}
Let \( \beta(t) = \beta_{\min} + (\beta_{\max} - \beta_{\min}) t \) be a linear noise schedule with \( \beta_{\min} > 0 \) and \( \beta_{\max} > \beta_{\min} \). Define
\[
\alpha(t) := \exp\left(-\int_0^t \beta(s)\, \rmd s \right), \quad \text{and} \quad f(t) := \beta(t) \cdot \sqrt{ \frac{\alpha(t)}{1 - \alpha(t)} }.
\]
Then \( f(t) \) is strictly decreasing for all \( t \in [0, 1) \).
\end{lemma}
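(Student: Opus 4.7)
\textbf{Proof plan for \cref{lemma:strictdecrease}.} My plan is to use logarithmic differentiation to obtain a clean expression for $f'(t)/f(t)$, then reduce the inequality $f'(t) < 0$ to an algebraic inequality that can be verified by a monotonicity argument. The key identity I will repeatedly exploit is $\alpha'(t) = -\beta(t)\alpha(t)$, which follows immediately from the definition $\alpha(t) = \exp(-\int_0^t \beta(s)\,\mathrm{d}s)$.

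First, I would take $\log f(t) = \log \beta(t) + \tfrac{1}{2}\log \alpha(t) - \tfrac{1}{2}\log(1-\alpha(t))$ and differentiate to obtain
\begin{equation*}
\frac{f'(t)}{f(t)} \;=\; \frac{\beta'(t)}{\beta(t)} + \frac{\alpha'(t)}{2\alpha(t)} + \frac{\alpha'(t)}{2(1-\alpha(t))} \cdot(-1)\cdot(-1).
\end{equation*}
Substituting $\alpha'(t) = -\beta(t)\alpha(t)$ and simplifying the last two terms by combining them over a common denominator yields
\begin{equation*}
\frac{f'(t)}{f(t)} \;=\; \frac{\beta'(t)}{\beta(t)} - \frac{\beta(t)}{2(1-\alpha(t))}.
\end{equation*}
Since $f(t) > 0$ on $(0,1)$, the sign of $f'(t)$ is determined by the sign of this quantity, so strict decrease is equivalent to
\begin{equation}\label{eq:keyineq}
2(1-\alpha(t))\,\beta'(t) \;<\; \beta(t)^2 \quad \text{for all } t \in (0,1).
\end{equation}

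Next, I would define $h(t) := \beta(t)^2 - 2(1-\alpha(t))\beta'(t)$ and prove $h(t) > 0$ by a boundary-plus-monotonicity argument. At $t=0$ we have $\alpha(0)=1$, hence $h(0) = \beta_{\min}^2 > 0$. Using again that $\beta'(t) = \beta_{\max}-\beta_{\min}$ is a positive constant and $\tfrac{\mathrm{d}}{\mathrm{d}t}(1-\alpha(t)) = \beta(t)\alpha(t)$, a short computation gives
\begin{equation*}
h'(t) \;=\; 2\beta(t)\beta'(t)\bigl(1-\alpha(t)\bigr),
\end{equation*}
which is strictly positive for all $t \in (0,1)$ because $\beta(t) > 0$, $\beta'(t) > 0$, and $\alpha(t) < 1$. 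Thus $h$ is strictly increasing on $[0,1)$ and bounded below by $\beta_{\min}^2 > 0$, establishing \cref{eq:keyineq} strictly and hence $f'(t) < 0$ on $(0,1)$.

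The manipulation is essentially routine once one sees the right reformulation; the only mild subtlety is that the naive approach (arguing $\beta(t)$ increasing vs.\ $\sqrt{\alpha/(1-\alpha)}$ decreasing) gives a product of two monotone-opposite factors and therefore cannot be settled without computing the derivative. The main \emph{insight} of the proof is thus to recognize that after logarithmic differentiation the inequality collapses to the quadratic-in-$\beta$ condition \cref{eq:keyineq}, which in turn is handled by showing $h'(t) \geq 0$ starting from a strictly positive base value $\beta_{\min}^2$. I expect no real obstacle; the delicate part is just being careful that the argument is strict (which it is, thanks to $\beta_{\min} > 0$ and $\beta_{\max} > \beta_{\min}$).
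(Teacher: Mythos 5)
Your proof is correct, and it diverges from the paper's argument at the decisive step. Both you and the paper begin with the same logarithmic differentiation and arrive at the same expression $f'(t)/f(t) = \beta'(t)/\beta(t) - \beta(t)/\bigl(2(1-\alpha(t))\bigr)$. From there the paper argues that this quantity is \emph{strictly decreasing} in $t$ — on the grounds that $\beta'(t)/\beta(t)$ decreases and $\beta(t)/(1-\alpha(t))$ increases — and then concludes negativity from the divergence to $-\infty$ as $t\to 0$. You instead clear denominators, reducing $f'<0$ to the algebraic inequality $2\beta'(t)\bigl(1-\alpha(t)\bigr) < \beta(t)^2$, and verify it by showing $h(t) := \beta(t)^2 - 2(1-\alpha(t))\beta'(t)$ satisfies $h(0)=\beta_{\min}^2>0$ and $h'(t) = 2\beta(t)\beta'(t)(1-\alpha(t)) > 0$. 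Your route is not only different but more watertight: the paper's intermediate claim that $\beta(t)/(1-\alpha(t))$ is strictly increasing does not follow from both numerator and denominator being increasing (a ratio of increasing positive functions need not be monotone), and in fact that ratio behaves like $1/t$ near $t=0$, so it is \emph{decreasing} there — which is precisely what the paper's own small-$t$ expansion exploits a few lines later. Your reformulation sidesteps this entirely and settles the sign by an elementary boundary-plus-monotonicity check. The one cosmetic point worth a sentence in a final write-up is the endpoint $t=0$, where $1-\alpha(0)=0$ and $f(0)$ is infinite; your argument gives $f'<0$ on $(0,1)$, and strict decrease on $[0,1)$ then holds in the extended-real sense, exactly as in the paper.
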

\begin{proof}
We first analyze the function \( f(t) \) by taking its logarithm:
\[
\log f(t) = \log \beta(t) + \frac{1}{2} \log\left( \frac{\alpha(t)}{1 - \alpha(t)} \right).
\]
Differentiating and using the chain rule, we attain
\begin{equation}
    \begin{split}
       & \frac{d}{dt} \log f(t) = \frac{\beta'(t)}{\beta(t)} + \frac{1}{2} \left( \frac{d }{d t} \log \alpha(t) - \frac{d }{dt} \log(1 - \alpha(t)) \right)
    \\ & =  \frac{\beta'(t)}{\beta(t)} + \frac{1}{2} \cdot \alpha'(t) \left( \frac{1}{\alpha(t)} + \frac{1}{1 - \alpha(t)} \right)
    \\ & 
= \frac{\beta'(t)}{\beta(t)} - \frac{1}{2} \cdot \frac{\beta(t)}{1 - \alpha(t)},
    \end{split}
\end{equation}
where we use $\alpha'(t) = -\beta(t) \cdot \alpha(t)$. $\beta'(t) = \beta_{\max} - \beta_{\min}$ is a constant that we denote as $C_\beta$. Let's define $
g(t) := \frac{d}{dt}\log f(t) $. We now analyze the sign of \( g(t) \). 
When \( t \approx 0 \), we take
\[
\beta(t) \approx \beta_{\min}, \quad \int_0^t \beta(s)\, \rmd s \approx \beta_{\min} t, \quad \alpha(t) = \exp(-\beta_{\min} t) \approx 1 - \beta_{\min} t + o(t).
\]
Thus,
\[
\frac{\beta(t)}{1 - \alpha(t)} \approx \frac{\beta_{\min}}{\beta_{\min} t} = \frac{1}{t}, \quad \text{which diverges as } t \to 0.
\]
Given the result
\[
\frac{\beta'(t)}{\beta(t)} \approx \frac{\beta_{\max} - \beta_{\min}}{\beta_{\min}},
\]
which is a finite number, thus, $
g(t) \to -\infty \, \text{as } t \to 0.$
To justify $g(t) < 0$ for $t \in (0, 1)$, we leverage an auxiliary function $H(t) := \beta(t)^2 - 2C_\beta(1 - \alpha(t))$, from which $H'(t) = 2C_\beta\beta(t) + 2C_\beta\alpha'(t)$, i.e.,
\[
H'(t) = 2C_\beta\beta(t)(1 - \alpha(t)).
\]
$H'(t) > 0$ is obvious for $t \in (0, 1)$. Since $H(t) \approx \beta_{\min}^2$ with $t \approx 0$, we arrive at $H(t) > 0$ for $t \in (0, 1)$ since it begins with a positive value and increases monotonically. We now have
\[
\beta(t)^2 > 2C_\beta(1 - \alpha(t)).
\]
After rearranging,
\[
\frac{1}{2} \cdot \frac{\beta(t)}{(1-\alpha(t))} > \frac{\beta'(t)}{\beta(t)}.
\]
It follows that \( g(t) < 0 \) for all \( t \in [0, 1) \). This implies that \( \log f(t) \) is strictly decreasing, and thus \( f(t) \) is strictly decreasing as well.
\end{proof}

Now we are ready to present the proof of \cref{corr:corollary1}.
\begin{proof}[Proof of \cref{corr:corollary1}]
Let \( \Delta(t, \delta) := \x(t + \delta) - \x(t) \). From the closed-form solution of the VP-SDE,
\[
\x(t) = \sqrt{\alpha(t)} \vx_{\text{adv}} + \sqrt{1 - \alpha(t)} \vepsilon,
\]
we have
\[
\Delta(t, \delta)
= \left( \sqrt{\alpha(t + \delta)} - \sqrt{\alpha(t)} \right) \vx_{\text{adv}}
+ \left( \sqrt{1 - \alpha(t + \delta)} - \sqrt{1 - \alpha(t)} \right) \vepsilon.
\]
Let us define
\begin{equation}
A := \sqrt{\alpha(t + \delta)} - \sqrt{\alpha(t)}, \quad B := \sqrt{1 - \alpha(t + \delta)} - \sqrt{1 - \alpha(t)}.
\label{def:ab}
\end{equation}
Then:
$
\Delta(t, \delta) = A \x_{\text{adv}} + B \vepsilon.
$ By the Lipschitz property of \( \phi \) and the Cauchy-Schwarz inequality:
\begin{equation}
\mathbb{E} \left[ \|\phi(\x(t + \delta)) - \phi(\x(t))\|_2 \right]
\leq L \cdot \mathbb{E} \left[ \| \Delta(t, \delta) \|_2 \right]
\leq L \cdot \sqrt{ \mathbb{E} \left[ \| \Delta(t, \delta) \|_2^2 \right] },
\label{eq:lipofdelta}
\end{equation}
where $L \in \{L_1, L_2\}.$ We now compute this second moment:
\[
\mathbb{E} \left[ \|\Delta(t, \delta)\|_2^2 \right]
= \mathbb{E} \left[ \|A \x_{\text{adv}} + B \vepsilon\|_2^2 \right]
= A^2 \|\x_{\text{adv}}\|_2^2 + B^2 \mathbb{E}\left[\|\vepsilon\|_2^2\right].
\]
Since $\vepsilon$ is standard Gaussian in \( \mathbb{R}^d \),
$
\mathbb{E}[\|\vepsilon\|^2] = d
$.
Thus,
\[
\mathbb{E} \left[ \|\Delta(t, \delta)\|_2^2 \right] = A^2 \|\x_{\text{adv}}\|_2^2 + B^2 d .
\]
We now perform Taylor expansion for \( A \) and \( B \) with respect to $\delta$. First note that
\[
\frac{d}{dt} \alpha(t) = -\beta(t) \alpha(t), \quad
\frac{d}{dt} \sqrt{\alpha(t)} = -\frac{\beta(t)}{2} \sqrt{\alpha(t)}.
\]
Plugging the above equation back into \cref{def:ab}, we derive
\[
A = \sqrt{\alpha(t + \delta)} - \sqrt{\alpha(t)} = -\frac{\beta(t)}{2} \sqrt{\alpha(t)} \delta + o(\delta).
\]
Similarly,
$$
\frac{d}{dt} \sqrt{1 - \alpha(t)} = \frac{\beta(t) \alpha(t)}{2 \sqrt{1 - \alpha(t)}}.
$$
Plugging the above equation back into \cref{def:ab}:
$$
B = \sqrt{1 - \alpha(t + \delta)} - \sqrt{1 - \alpha(t)} = \frac{\beta(t) \alpha(t)}{2 \sqrt{1 - \alpha(t)}} \delta + o(\delta).
$$
Take the square and sum them up:
\begin{align*}
A^2 + B^2
&= \frac{\beta(t)^2 \delta^2}{4} \left( \alpha(t) + \frac{\alpha(t)^2}{1 - \alpha(t)} \right) + o(\delta^2)
= \frac{\beta(t)^2 \alpha(t) \delta^2}{4(1 - \alpha(t))} + o(\delta^2).
\end{align*}
Since $\|\vx_{\text {adv}}\|^2\leq d$, we gain
\begin{equation}
\mathbb{E} \left[ \|\Delta(t, \delta)\|_2^2 \right]
\leq \frac{d \cdot \beta(t)^2 \alpha(t)}{4(1 - \alpha(t))} \delta^2 + o(\delta^2).
\label{eq:edelta}
\end{equation}
Plugging \cref{eq:edelta} into \cref{eq:lipofdelta}, we arrive:
\[
\mathbb{E} \left[ \|\phi(\x(t + \delta)) - \phi(\x(t))\|_2 \right]
\leq L \cdot \sqrt{ \mathbb{E} \left[ \|\Delta(t, \delta)\|_2^2 \right] }
= O\left( L \cdot \delta \cdot \beta(t) \sqrt{ \frac{\alpha(t) }{1 - \alpha(t)} } \right).
\]
Lastly, by \cref{lemma:strictdecrease},  
 we prove that the bound on the right hand side decreases as $t$ grows.
\end{proof}

\section{Additional experiments}
\label{sec:addex}

\subsection{More implementation details}
\label{more}
All baseline methods are evaluated using their respective best-performing hyperparameters as reported in the original papers. The experiments in \cref{hj} are conducted on an NVIDIA H100 GPU. We fix the threshold except for experiments in \cref{sec:tau}.  For stronger attacks
with $\epsilon > 4/255$, we set the minimum diffusion depth to $0.04$ for sufficient denoising. Unless otherwise specified, the remaining setups of the experiments in appendix are the same as those in the main text.

To keep the evaluation of adaptive attacks on large VLMs computationally tractable, we randomly choose $100$ images per dataset. BPDA~\citep{athalye2018obfuscated} attacks are run for $50$ iterations. For the forward pass, we execute $x' = \text{DiffCAP}(x)$ wrapped as a \texttt{torch.nn.Module} to serve as the first layer of the VLM. For the backward pass, we approximate the gradient of the $\text{DiffCAP}(\cdot)$ with respect to the input as the \textit{identity matrix} ($\nabla_x \text{DiffCAP}(x) \approx I$), leveraging the ``detach'' trick in \texttt{PyTorch}. This implementation allows BPDA to fully exploit DiffCAP's knowledge, including the CLIP (ViT-B/32), the stopping rule, and the semantic similarity threshold. We integrate this BPDA module directly into the APGD framework. For EOT~\citep{athalye2018synthesizing}, we estimate the expected gradients by averaging the BPDA-derived gradients over three stochastic forward passes of the $\text{DiffCAP}(\cdot)$ for attack optimization. BPDA and EOT ensure the adaptive adversarial examples are generated with stable gradient flow and robust to the randomness from the diffusion sampling.

\subsection{Zero-shot classification}

\begin{table*}[t]
  \centering
  \setlength{\tabcolsep}{4pt}
  \renewcommand{\arraystretch}{1.1}
  \caption{Top-1 accuracy ($\%$) of CLIP in ZSC task with clean images and adversarial perturbations of two sizes under different defenses. $2$ and $4$ with TeCoA and FARE suggest the version that is fine-tuned by $\ell_\infty^{2/255}$ and $\ell_\infty^{4/255}$ bounded adversarial examples, respectively. The best result is in \textbf{bold} and the runner-up is \underline{underlined}.}
  \vspace{0.05in}
  
  \begin{tabular}{l||lll|lll}
    \toprule
    \multirow{2}{*}{\textbf{Defense}}
     &
    \multicolumn{3}{c|}{\textbf{CalTech101}} &
    \multicolumn{3}{c}{\textbf{ImageNet-R}} \\ 
     &
     clean & $\ell_\infty^{2/255}$ & $\ell_\infty^{4/255}$ &
     clean & $\ell_\infty^{2/255}$ & $\ell_\infty^{4/255}$ \\ 
    \midrule
    
      No defense & 83.3 & 0.0 & 0.0 & 87.9
      & 0.0 & 0.0   \\
    \cdashline{1-7}
    TeCoA$^{2}$ \citep{mao2022understanding}& 80.7 & 70.2 & 57.4 &
     80.1 & 58.8 & 36.7  \\
    FARE$^{2}$ \citep{schlarmann2024robust}& \textbf{84.8} & 73.0 & 46.6 &
     85.5 & 56.5 & 25.6  \\
    TeCoA$^{4}$ \citep{mao2022understanding}& 78.4 & 69.7 & 60.9 &
     74.3 & 59.2 & 41.9  \\
    FARE$^{4}$ \citep{schlarmann2024robust}& \underline{84.7} & 76.7 & 64.1 &
     80.2 & 61.6 & 40.6  \\
    \cdashline{1-7}
    JPEG-DL \citep{salamah2024jpeg}    & 83.9 & 68.5 & 33.4 & \textbf{87.8} & 48.5 & 16.4   \\
    DiffPure \citep{nie2022diffusion}& 83.6 & \textbf{83.2} & \textbf{83.1} & 81.0 & 79.7 & 79.7   \\
    CLIPure \citep{zhang2025clipure} & 82.9 & 80.8 & 80.1 & \underline{87.7} & \textbf{85.4} & \textbf{84.6}  \\
    \rowcolor{gray!30}
    DiffCAP   & 82.6
              & \underline{81.7} & \underline{80.9} & 87.2
              & \underline{84.4} & \underline{81.1}   \\ 
    \bottomrule
      \end{tabular}
  
  \label{tab4}
\end{table*}

\begin{table}[t]
    \centering
    \caption{CIDEr scores with VLMs OF-9B and LLaVA 1.5-7B on datasets COCO and Flickr30k.}
    \vspace{0.05in}
    \label{tab:w2_cider}
        \begin{tabular}{l|cccc}
            \toprule
            \textbf{Defense} & \textbf{OF-COCO} & \textbf{OF-Flickr30k} & \textbf{LLaVA-COCO} & \textbf{LLaVA-Flickr30k} \\
            \midrule
            Clean & 79.7 & 60.1 & 115.5 & 77.5 \\
            No defense & 0.9 & 0.3 & 1.5 & 0.5 \\
            JPEG-DL & 5.2 & 3.2 & 5.0 & 2.7 \\
            DiffPure & 72.7 & 49.2 & 100.0 & 62.5 \\
            \rowcolor{gray!30}
            DiffCAP & \textbf{74.8} & \textbf{53.2} & \textbf{109.7} & \textbf{67.6} \\
            \bottomrule
        \end{tabular} 
\end{table}

From \cref{tab4}, we observe that JPEG-DL underperforms compared to TeCoA and FARE, particularly under stronger attacks. CLIPure recovers its defense effectiveness, as only the \texttt{[CLS]} token is involved in prediction and no generative decoding is required. On CalTech101, DiffCAP outperforms CLIPure under attacks, and on ImageNet-R, DiffCAP achieves higher robustness than DiffPure. These results confirm the generalizability of DiffCAP: it not only excels in complex vision-language tasks requiring rich semantics but also delivers stable performance on standard classification benchmarks with more sparse semantic demands.

\subsection{Larger attack budget}

We conducted additional experiments on the IC task using the larger budget of $\ell_{\infty}^{16/255}$. We excluded TeCoA, FARE, and CLIPure from this evaluation, since they exhibited limited performance even under smaller perturbations. The results are reported in \cref{tab:w2_cider}. Unlike under $\ell_{\infty}^{2/255}$ and $\ell_{\infty}^{4/255}$ attacks, JPEG-DL collapses with negligible improvement. While DiffPure demonstrates that diffusion-based purification remains viable at this magnitude, it consistently underperforms DiffCAP by a substantial margin across different combinations of VLMs and datasets.

\subsection{Hallucination and jailbreaking}
\label{hj}
\begin{table}[t]
\centering
\caption{Evaluation of DiffCAP in mitigating Hallucination and Jailbreaking of large VLMs.}
\label{tab:other}
\vspace{0.05in}
\begin{tabular}{l||ccc||cccccc}
\toprule
& \multicolumn{3}{c||}{\textbf{Hallucination with LLaVA 1.5-13B}}  & \multicolumn{5}{c}{\textbf{Jailbreaking with MiniGPT 4-13B}} \\
& adversarial & popular & random  & any & identity & disinfo & crime & x-risk\\
\midrule
Clean   & 82.7 & 84.3 & 85.0 & 16/40 & 3/11 & 6/13 & 6/13 & 1/3\\
Attack   & N/A & N/A & N/A & 24/40 & 6/11 & 7/13 & 12/13 &  2/3\\
DiffCAP  & 83.2 & 85.1 & 86.3 & 14/40 & 3/11 & 5/13 & 5/13 &  1/3  \\
\bottomrule
\end{tabular}
\end{table}

Large VLMs tend to hallucinate objects that are not actually present in the image. POPE~\citep{li2023evaluating} serves as a benchmark to formulate hallucination detection as a binary classification task. In \cref{tab:other}, we report the F1-scores across three POPE categories using LLaVA 1.5-13B, with and without DiffCAP applied as image preprocessing. A consistent improvement is observed with DiffCAP. This suggests that DiffCAP, through Langevin dynamics, walks image features to semantically stable regions of the distribution. By suppressing high-frequency adversarial or spurious signals, DiffCAP becomes less sensitive to misleading cues and more robust against hallucination.

Large VLMs are also vulnerable to jailbreaking attacks on the visual modality~\citep{carlini2024aligned}, where adversarially crafted images can induce harmful outputs in response to restricted prompts (e.g., \texttt{``How to make a bomb?''}). We apply the attack proposed by~\citep{qi2024visual} to MiniGPT 4-13B and count the policy-violating outputs triggered by $40$ harmful prompts spanning four categories. Even under a stronger perturbation budget ($\ell_\infty^{16/255}$), DiffCAP successfully restores the model’s behavior to a level comparable to, or slightly better than, the clean condition. These findings reinforce the versatility of DiffCAP as a modular defense measure, readily adaptable to various VLMs and tasks requiring robustness guarantee. As jailbreaking attacks continue to evolve rapidly, benchmarking DiffCAP against such threats falls outside the scope of this work, but nonetheless marks a promising direction for future investigation.

\begin{table}[t]
    \centering
    \caption{Perceptual quality and semantic preservation. $\uparrow$ for higher is better, and $\downarrow$ for lower is better.}
    \vspace{0.05in}
    \label{tab:perceptual_quality}
        \begin{tabular}{l||cc|cc|cc}
            \toprule
            \multirow{2}{*}{\textbf{VLM-Dataset}} & \multicolumn{2}{c|}{\textbf{PSNR (dB) $\uparrow$}} & \multicolumn{2}{c|}{\textbf{LPIPS $\downarrow$}} & \multicolumn{2}{c}{\textbf{CLIP Score $\uparrow$}} \\   
             & DiffPure & DiffCAP & DiffPure & DiffCAP & DiffPure & DiffCAP \\
            \midrule
            OF-COCO & 24.27 & \textbf{29.03} & 0.296 & \textbf{0.160} & 0.871 & \textbf{0.930} \\
            OF-Flickr30k & 23.44 & \textbf{28.21} & 0.298 & \textbf{0.151} & 0.865 & \textbf{0.930} \\
            LLaVA-COCO & 25.41 & \textbf{29.76} & 0.304 & \textbf{0.200} & 0.849 & \textbf{0.893} \\
            LLaVA-Flickr30k & 24.59 & \textbf{29.08} & 0.306 & \textbf{0.194} & 0.837 & \textbf{0.884} \\
            \bottomrule
        \end{tabular}
\end{table}

\begin{table}[H]
    \centering
    \caption{VQA accuracy (\%) on the TextVQA dataset with OF-9B under different attack radii.}
    \vspace{0.05in}
    \begin{tabular}{cc|ccc|ccc}
        \toprule
        & clean & $\ell_\infty^{2/255}$ & $\ell_\infty^{4/255}$ & $\ell_\infty^{8/255}$ & $\ell_2^{2/255}$ & $\ell_2^{4/255}$ & $\ell_2^{8/255}$ \\
        \midrule
        No defense & 23.8 & 0.0 & 0.0 & 0.0 & 6.2 & 6.0 & 4.8 \\
        DiffCAP & 18.6 & 16.2 & 16.7 & 16.7 & 16.5 & 16.6 & 16.2 \\
        \bottomrule
    \end{tabular}
    \label{tab:l2}
\end{table}

\subsection{Perceptual quality}
We assessed the perceptual quality and semantic preservation of the purified images compare to their clean counterparts on the IC task under $\ell_{\infty}^{4/255}$ adversarial perturbations. We employed three standard metrics: PSNR (Peak Signal-to-Noise Ratio) measures pixel-level signal fidelity; LPIPS~\citep{zhang2018unreasonable} (Learned Perceptual Image Patch Similarity) measures perceptual distance using deep features, aligning closely with human visual perception; CLIP Score measures the semantic consistency.

\begin{table}[t]
\centering
\caption{Runtime and memory comparison.}
\vspace{0.05in}
\begin{tabular}{l|cc}
\toprule
\textbf{Method} & \textbf{Time per Image} & \textbf{Peak GPU Memory} \\
\midrule
JPEG-DL & 0.01 s & N/A (CPU) \\
DiffPure & 2.3 s & 2.7 GB \\
CLIPure & 0.01 s & 0.3 GB \\
\rowcolor{gray!30}
DiffCAP & 1.1 s & 2.5 GB \\
\bottomrule
\end{tabular}
\label{tab:runtime_memory}
\end{table}

\begin{table}[t]
    \centering
    \caption{The CIDEr score gains of DiffCAP over DiffPure. $\ell_\infty^{8/255}$ attack is with BPDA.}
    \vspace{0.05in}
    \begin{tabular}{c|ccc}
        \toprule
        Diffusion time & $\ell_\infty^{2/255}$ & $\ell_\infty^{4/255}$ & $\ell_\infty^{8/255}$ \\
        \midrule
        0.020 & 5.2 & 5.9 & 18.2 \\
        0.075 & 11.2 & 11.9 & 11.7 \\
        \bottomrule
    \end{tabular}
    \label{tab:cider}
\end{table}

The quantitative comparison between DiffCAP and DiffPure is presented in \cref{tab:perceptual_quality}. DiffCAP significantly outperforms DiffPure across all three metrics with different VLMs and datasets. The higher PSNR and lower LPIPS indicate that DiffCAP preserves fine-grained visual details and reduces perceptual artifacts effectively. The superior CLIP Scores further validate our method's high semantic integrity, establishing a new SOTA balance between robustness and faithfulness among purification methods.

\subsection{Other discussion}
\shortsection{$\boldsymbol{\ell}_{\bm{2}}$ bounded threats}
We perform a test of the VQA task, where \cref{tab:l2} compares DiffCAP with no-defense baseline. The results demonstrate that DiffCAP maintains its effectiveness regardless of whether the adversarial perturbations follow $\ell_\infty$ or $\ell_2$ norm bound.

\shortsection{End-to-end runtime and memory comparisons}
We report the additional computational overhead introduced by the defense mechanisms, excluding the standard VLM inference portion. All measurements were performed with a batch size of $1$, on the COCO with OF and $\ell_{\infty}^{2/255}$ adversarial examples.

For adversarially fine-tuned vision encoders (TeCoA and FARE), the computational burden is entirely from training phase. Once deployed, these methods incur zero additional inference latency and no extra memory overhead compared to the original VLMs, as they simply replace the weights of vision encoder. However, they 
\begin{table}[H]
    \centering
    \caption{Top-1 accuracy (\%) on 1,000 randomly selected MNIST test images under $\ell_\infty^{4/255}$ attack.}
    \vspace{0.05in}
    \begin{tabular}{cccc}
        \toprule
        Clean & No defense & DiffCAP ($\tau=0.96$) & DiffCAP ($\tau=0.97$) \\
        \midrule
        75.2 & 0.0 & 79.6 & 80.9 \\
        \bottomrule
    \end{tabular}
    \label{tab:domain}
\end{table}
require massive computational resources for training on perturbed data. For test-time defenses, we report the comparison of runtime and memory usage in the \cref{tab:runtime_memory}.

\shortsection{Under- and over-purification}
We conduct a trial of IC task on the COCO dataset with LLaVA 1.5-7B. \cref{tab:cider} records the reductions in CIDEr scores of DiffPure with two diffusion time compared to DiffCAP under three attack radii. DiffPure with a lower fixed diffusion time suffers from under-purification (insufficient robustness) against stronger attacks, while using a higher fixed diffusion time leads to over-purification (visual artifacts) against weaker attacks. This underscores the benefit of DiffCAP’s design, where the dynamic diffusion mechanism determines the optimal purification extent for different attack vectors.

\section{Systematic analysis of hyperparameters}
\label{sec:tau}

\subsection{The variance of the threshold}

The number of images in the calibration set for \cref{alg:adaptive_threshold_calc} has a faint impact on threshold calculation. When we vary the subset size of image pairs to $100$, $200$, $300$, and use three different random seeds, the resulting $\tau$ remains stable at $0.958\pm0.003$.

\begin{table}[t]
    \centering
    \caption{Top-1 accuracy (\%) of ZSC task with CLIP under $\ell_\infty^{4/255}$ attack on OOD datasets.}
    \vspace{0.05in}
    \label{tab:comparison}
    \begin{tabular}{l|cc}
        \toprule
        \textbf{Setting} & \textbf{ImageNet1K} & \textbf{ImageNet-S(ketch)} \\
        \midrule
        Clean & 75.5 & 58.5 \\
        No defense & 0.0 & 0.1 \\
        DiffPure & 63.5 & 50.1 \\
        DiffCAP ($\tau=0.95$) & \textbf{69.2} & 51.9 \\
        DiffCAP ($\tau=0.96$) & 67.6 & 52.5 \\
        DiffCAP ($\tau=0.97$) & 66.7 & \textbf{53.1} \\
        \bottomrule
    \end{tabular}
\end{table}

\subsection{The threshold for out-of-distribution (OOD)}

Severe distribution shifts naturally degrade VLM performance compared to in-domain natural images. For example, VLMs can underperform simple Convolutional Neural Networks (CNNs) on datasets like MNIST~\citep{deng2012mnist}. However, our empirical results demonstrate that $\tau$ is highly robust, and \cref{alg:adaptive_threshold_calc} serves as an optimizer when representative data (e.g., medical or satellite image) are available. The calibration provides a minor performance boost typical of hyperparameter tuning but is not required to achieve SOTA defense.

We first deliver a validation on the MNIST dataset for ZSC task with CLIP. The results in \cref{tab:domain} suggest that DiffCAP maintains strong performance even when the domain departs from natural, real-world imagery that characterizes most VLM application scenarios. While the threshold $0.96$ is already robust for this non-natural, digital domain, using the threshold $0.97$ computed on the MNIST via \cref{alg:adaptive_threshold_calc} brings a slight gain in performance.

To further analyze how semantic complexity affects the acquisition of $\tau$, we conducted additional experiments on two datasets with distinct styles: ImageNet1K (colorful photos representing \textit{rich} semantics) and ImageNet-S~\citep{wang2019learning} (black and white outlines representing \textit{sparse} semantics). We compressed resolutions and crafted long-tail categories to simulate the extreme OOD conditions.

\cref{alg:adaptive_threshold_calc} calibrates $\tau=0.95$ for ImageNet1K and $\tau=0.97$ for ImageNet-S. This aligns with our intuition: rich semantics are easier to stabilize, necessitating a slightly lower threshold, whereas sparse semantics reach the recovery region harder. The former contains redundant textural and color cues that facilitate faster feature reconstruction by the diffusion model, and the latter is more sensitive to noise interference. Despite these differences, the performance variance across the $\tau\in[0.95, 0.97]$ interval is inapparent. Even using a sub-optimal threshold, DiffCAP consistently outperforms the DiffPure.

The results in \cref{tab:comparison} corroborated that $\tau$ is not brittle to severe domain shifts in terms of both measurement and deployment. Users can safely tune within the $0.95\sim0.97$ ``safety belt'' without the risk of losing SOTA performance, although \cref{alg:adaptive_threshold_calc} allows for a quicker hyperparameter search.

\subsection{The threshold for different tasks}

The calculation of $\tau$ by \cref{alg:adaptive_threshold_calc} is based on the semantic stability of image embeddings and is therefore \textit{task-agnostic}. Here we verify whether the calibrated threshold remains optimal across different downstream tasks. As shown in \cref{fig4}, for the IC task on COCO with OF-9B under $\ell_\infty^{2/255}$ attack, the calibrated $\tau = 0.96$ delivers the overall highest CIDEr scores across various diffusion step sizes $\Delta t$, reflecting the effectiveness of \cref{alg:adaptive_threshold_calc}.

\begin{figure}[t]
  \centering
  \includegraphics[width=1.0\textwidth]{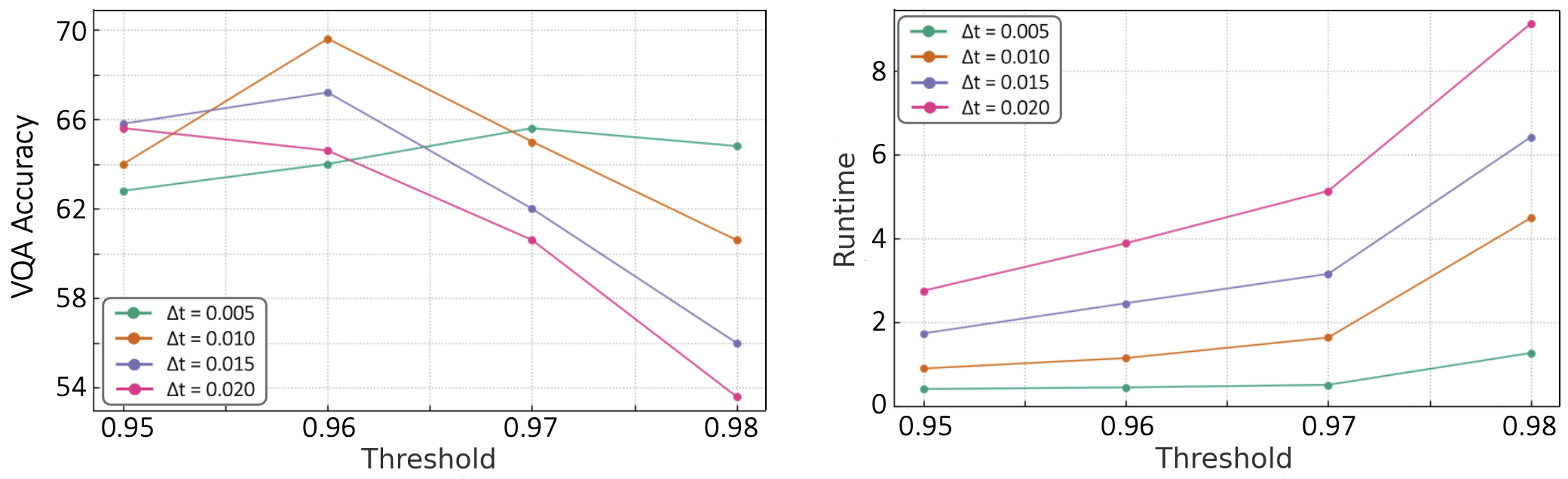}
  \caption{VQA Accuracy (\%) and running time (in seconds) per image with varying thresholds $\tau$ and diffusion step sizes ($\Delta t$) for DiffCAP. The evaluation is based on the VQA task under $\ell_\infty^ {4/255}$ attack.}
  \label{fig5}
\end{figure}

To demonstrate task transferability of $\tau$, we conducted an analogous ablation study for the VQA task on VQAv2 with LLaVA 1.5-7B under $\ell_\infty^{4/255}$ attack. The \cref{fig5} visualizes the quantitative results, which indicate that $\tau=0.96$ achieves the highest accuracy at the default step size $\Delta t=0.010$ while maintaining the efficiency. This is consistent with our observation in \cref{fig4}.

\subsection{The threshold for various CLIP}

\cref{alg:adaptive_threshold_calc} employs a vision encoder to quantify semantic stability. To assess whether the encoder architecture biases the calibration of $\tau$, we alternates \cref{alg:adaptive_threshold_calc} with several CLIP variants. The calibrated $\tau$ values are listed in \cref{tab:thresholds}, clustering around $0.96$.

\begin{table}[t]
    \centering
    \caption{The calibrated threshold across different vision backbones via \cref{alg:adaptive_threshold_calc}.}
    \vspace{0.05in}
    \label{tab:thresholds}
    \begin{tabular}{lcccccc}
        \toprule
         & RN50 & RN101 & ViT-B/32 & ViT-B/16 & ViT-L/14 & \textbf{Avg.}\\
        \midrule
        $\tau$ & 0.964 & 0.967 & 0.958 & 0.956 & 0.962 & \textbf{0.961} \\
        \bottomrule
    \end{tabular}
\end{table}

\begin{table}[t]
    \centering
    \caption{Comparison of different scheduling strategies.}
    \vspace{0.05in}
    \label{tab:schedulers}
        \begin{tabular}{l|ccc}
            \toprule
            \textbf{} & \textbf{Linear} & \textbf{Cosine} & \textbf{Exponential Decay} \\
            \midrule
            CIDEr Score for OF-COCO-$\ell_\infty^{2/255}$ & 79.3 & 79.2 & 79.2 \\
            VQA Acc. (\%) for LLaVA-VQAv2-$\ell_\infty^{4/255}$ & 68.5 & 68.0 & 68.1 \\
            \bottomrule
        \end{tabular}
\end{table}

We posit that for natural images, unless the encoder architecture is fundamentally altered (e.g., deviating from the \textit{contrastive learning} paradigm), the semantic threshold is primarily governed by the underlying data distribution rather than the specific vision backbone. This clue retrospectively explains the results in \cref{tab:b}, where we revealed that $\tau=0.96$ remains effective when the DiffCAP vision encoder is replaced by other CLIP variants. Overall, DiffCAP is largely insensitive to the vision backbone used for either calibration or purification.

\subsection{The step size for robustness-fidelity}

The trade-off is primarily influenced by the diffusion depth: deeper diffusion enhances robustness but risks losing information, while shallower diffusion preserves details but leaves residual adversarial noise. In DiffCAP, the diffusion depth is dynamically determined by the coupling between the $\Delta t$ and $\tau$.

Larger $\Delta t$ ``jumps farther'' in the embedding space between steps. If combined with a high $\tau$, i.e., a strict stability requirement, the forward diffusion may \textit{overshoot} the recovery region, leading to redundant steps and over-purification. Smaller $\Delta t$ induces finer granularity. If combined with a low $\tau$, the stability condition can be triggered too early, causing \textit{premature} stopping and under-purification.

Therefore, an optimal trade-off requires balancing $\Delta t$ and $\tau$, avoiding configurations where they are simultaneously too large or too small. \cref{fig4} and \cref{fig5} argue that, across different tasks, datasets, attack magnitudes, and VLMs, there exists a relatively secure range of $\Delta t$ selection. With $\tau=0.96$ fixed, $\Delta t\in[0.005, 0.015]$ can outperform DiffPure, where $\Delta t=0.10$ is set as a reliable and efficient default.

\subsection{The justification of scheduler}

We employ a linear noise schedule in DiffCAP for theoretical guarantee, alignment with diffusion dynamics, and empirical performance. Our theoretical derivations are explicitly formulated under the linear precondition. Regardless of whether the noise injection follows a linear or non-linear schedule, the cumulative noise will eventually push the image into the provable recovery region. With a sufficiently small step size ($\Delta t\approx0.01$), the specific noise trajectory only marginally shifts the precise timestamp at which this region is entered but does not alter the fundamental semantic convergence behavior.

DiffCAP operates in \textit{conjunction} with a pre-trained diffusion model for the reverse denoising step, where linear $\beta$ schedule is adopted. To experimentally support DiffCAP is invariant to the moderate noise schedule variations, we compare our default Linear scheduler against a Cosine ($\Delta t'=\Delta t\cdot\cos(\frac{\pi i}{2N})$) and an Exponential Decay ($\Delta t'=\Delta t\cdot0.9^{i}$) scheduler, where $i$ denotes the step index and $N$ refers to the total number of steps. As presented in \cref{tab:schedulers}, the alternative schedulers exhibit no material performance differences.

\end{document}